%% file: main.tex
\documentclass[hidelinks,onefignum,onetabnum]{siamart250211} 

\usepackage{amsmath}
\usepackage{amssymb}
\usepackage{bm}
\usepackage{mathdots}
\usepackage{latexsym}
\usepackage{graphicx}
\usepackage{pgfplots}
\usepackage{hyperref}

\usepackage{color}
\usepackage{fullpage}
\usepackage{verbatim}
\usepackage{algorithm}
\usepackage{algorithmic}
\usepackage{amsopn,url}
\usepackage[caption=false]{subfig}

\def\Pi{\prod}

\def\Xt{\widetilde X}

\def\Vt{\widetilde V}
\def\Uh{\widehat U}
\def\Ut{\widetilde U}
\def\ut{\widetilde u}
\def\vt{\widetilde v}

\def\Uc{\check U}
\def\Xc{\check X}
\def\Vc{\check V}
\def\Sigc{\check \Sigma}

\def\Sigh{\widehat\Sigma}
\def\sigh{\widehat\sigma}

\def\R{\mathbb{R}}
\newcommand{\uni}[1]{{\left\vert\kern-0.25ex\left\vert\kern-0.25ex\left\vert #1  \right\vert\kern-0.25ex\right\vert\kern-0.25ex\right\vert}} 
\newcommand{\unii}[1]{{\vert\kern-0.25ex\vert\kern-0.25ex\vert #1  \vert\kern-0.25ex\vert\kern-0.25ex\vert}} 
\newcommand{\Uniinv}[1]{{\big\vert\kern-0.25ex\big\vert\kern-0.25ex\big\vert #1  \big\vert\kern-0.25ex\big\vert\kern-0.25ex\big\vert}} 
\newcommand{\uniinv}[1]{{\left\vert\kern-0.25ex\left\vert\kern-0.25ex\left\vert #1  \right\vert\kern-0.25ex\right\vert\kern-0.25ex\right\vert}} 

\newcommand{\mbb}{\mathbb}

\renewcommand{\Re}{\mbb{R}}

\newcommand{\norm}[1]{\left\|{#1}\right\|}

\newcommand{\image}{\operatorname{span}}

\newcommand{\diag}{{\rm diag}}

\newtheorem{remark}{{\sc Remark}}[section]

\newcommand{\ignore}[1]{}

\title{SuperPCA: subspace analysis and an efficient algorithm for 
high-dimensional PCA}
\author{
Irina-Beatrice Haas\thanks{Mathematical Institute, University of Oxford, Oxford, OX2 6GG, UK, (\email{irina-beatrice.haas@maths.ox.ac.uk}, \email{yuji.nakatsukasa@maths.ox.ac.uk}).}
\and 
Maike Meier\thanks{Bernoulli Institute for Mathematics, Computer Science and Artificial Intelligence, University of Groningen, 9700 AB, Groningen, The Netherlands (\email{m.meier@rug.nl})}
\and Yuji Nakatsukasa\footnotemark[1]
\and 
Taejun Park\thanks{Institute of Mathematics, EPFL, 1015 Lausanne, Switzerland (\email{taejun.park@epfl.ch}).}}

\begin{document}

\maketitle

\begin{abstract}
    Principal component analysis (PCA) is a fundamental tool to reduce the dimensionality of the data in many applications. PCA finds a few signal directions that contain most of the variability of the data by computing the eigenvectors of the sample covariance matrix. In this work, we focus on the spiked covariance model, in which the data vectors are defined by a few orthogonal signals plus an isotropic Gaussian noise, and our goal is to estimate one or more of the leading signals. Our main theoretical finding is that the subspace spanned by several leading eigenvectors of the sample covariance matrix contains significant information about the desired signals long before the individual eigenvectors converge to the population principal components. To prove this, we derive a posteriori bounds for the angle between the subspace spanned by the desired population signals and the subspace obtained from the sample using perturbation theory for singular vectors. This leads to a new algorithm, SuperPCA (SUbsPace subsamplER PCA), which capitalizes on an approximate eigenspace of the sample covariance matrix to find the leading signals far more efficiently and accurately than classical PCA in the high-dimensional, multi-signal setting. SuperPCA exploits only a small number of subsampled coordinates of the data, which can lead to tremendous savings in data acquisition cost, especially when the signals are approximately sparse. For the same number of measurements, SuperPCA can offer a factor $10$ improvement in accuracy compared to the classical PCA method. 
\end{abstract}

\begin{keywords}
principal component analysis, spiked covariance model, singular vector perturbation, randomized numerical linear algebra, subsampled least squares
\end{keywords}

\begin{AMS}
65F15, 62H25, 65F20, 65F55
\end{AMS}

\normalsize

\input{1.Introduction}

\input{2.LitReview_new}

\input{TheoryAngles_new}

\input{SuperPCA_new}

\input{ExperimentsSPCA}

\input{Proofs}

\bibliographystyle{siamplain} 
\bibliography{references}

\end{document}

%% file: 1.Introduction.tex
\section{Introduction}\label{sec:introduction}
Principal Component Analysis (PCA) is an important tool in data analysis and dimension reduction. Given a dataset, PCA seeks to find (orthogonal) directions that best explain the variance in the data.  We consider the popular spiked covariance model introduced in \cite{johnstone2001distribution}.

In this model, we observe $n$ $p$-dimensional measurements of the form
\begin{equation}\label{eq:1.defmodel}
    x_i = \sum_{j=1}^k\sqrt{\beta_j}g_j^iu_j + \sigma \eta_i, \quad i = 1,\dots,n
\end{equation}
where $\sqrt{\beta_j}g_j^iu_j$ are signals and $\sigma \eta_i$ is noise. We have $k(\geq 1)$ $p$-dimensional signals with orthonormal directions $u_1,\dots,u_k$ and respective signal strengths $\beta_1\geq\dots\geq\beta_k>0$. The noise is assumed random normal with intensity $\sigma>0$. The $g_j^i\sim\mathcal{N}(0,1)$ and $\eta_i\sim\mathcal{N}(0, I_p)$ are scalar and vector standard normal random variables, respectively. Each measurement $x_i$ is then independently and identically distributed like $x\sim\mathcal{N}(0, K_{\textrm{pop}})$ with population covariance matrix $K_{\textrm{pop}}$ given by
\begin{equation}\label{eq:1.defsigma1}
    K_{\mathrm{pop}} = \sum_{j=1}^k\beta_j u_j u_j^T + \sigma^2 I_p.
\end{equation}
The population covariance matrix has dominant eigenvectors $u_1,\dots,u_k$ with corresponding eigenvalues $\lambda_1 = \beta_1 + \sigma^2,\dots, \lambda_k = \beta_k + \sigma^2$. We call the vectors $u_j$ the population principal components. The aim of PCA is to approximate the principal components based on samples drawn from $\mathcal{N}(0,K_{\mathrm{pop}})$. Classically, this is done through the eigendecomposition of the sample covariance matrix. Namely, collect the samples in a data matrix $X\in\mathbb{R}^{p\times n}$ of the form $ X = \left[x_1,\dots,x_n\right]$ and define the sample covariance matrix $S_n = XX^T/n$. The main topic of analysis in PCA is how well the dominant eigenvectors of $S_n$, say $\hat{u}_1,
\dots,
\hat{u}_k$, called the sample principal components, approximate the population principal components, i.e., $u_1,\ldots,u_k$.

Instead of the eigenvalue decomposition, the analysis in this paper is based on singular vector perturbation theory, and we will mostly focus on the         singular value decomposition (SVD) 
of the scaled data matrix $X_n = X/\sqrt{n} = \hat{U}\hat{\Sigma}\hat{V}$. The left singular vectors $\hat{U}$ are the same as the eigenvectors of $S_n$. The singular values $\hat{\sigma}_1\geq\dots\geq\hat{\sigma}_{\min(p,n)}\geq 0$ are the square roots of the eigenvalues of $S_n$. Ideally, the dominant singular values of $X_n$ approximate the singular values $\sigma_1,\dots,\sigma_p$ of the square root of $K_{\mathrm{pop}}$. That is, $\sigma_j = \sqrt{\lambda_j} = \sqrt{\beta_j + \sigma^2}$ for $j=1,\dots,k$ and $\sigma_j= \sigma$ for $j>k$.

PCA can be considered in different (asymptotic) regimes. In the classical context the number of samples $n$ tends to infinity and the dimension of the problem $p$ is fixed.  
It is well known that the sample covariance matrix then converges to the population covariance matrix almost surely as $n\to\infty$. As a result, the sample principal components consistently estimate the population principal components ~\cite{anderson1963asymptotic,muirhead1982aspects}. 

However, in modern applications, the dimension of the problem $p$ is often of comparable size to the number of samples $n$, sometimes even larger. In this regime, the sample principal components are generally inconsistent and exhibit the so-called Ben Arous-Baik-P\'{e}ch\'{e} (BBP) phase transition which describes sufficient conditions on the signal to noise ratio and dimensions $n,p$ to detect the signal \cite{baik2005phase,Baik2006,johnstone2001distribution,Paul2007,yao2015sample}. Recent work has therefore focused on finite-sample bounds that quantify the estimation error in the principal components for large but finite $p$ and $n$ \cite{johnstone2018pca,koltchinskii2017normal,Nadler2008,reiss2020nonasymptotic,vaswani2017finite}. These bounds typically depend on spectral gaps between consecutive eigenvalues of the population covariance matrix. 

Our work builds on this finite-sample perspective. Rather than estimating each signal from a sample principal component of the same dimension, we show that a larger sample left singular-subspace of $S_n$ can provide substantially more accurate information about the desired population subspace, especially when several signal strengths are close together.

The theoretical findings have computational and practical implications. Once we find a subspace in which the desired signal lies with sufficient accuracy, a natural goal is to find the signal from the subspace. We introduce an algorithm, SuperPCA, that attempts to do so.

\subsection{Motivating examples}
The reasoning behind considering subspaces of differing dimensions can be illustrated with examples. First, consider a model with three (orthonormal) signals $u_1$, $u_2$, and $u_3$ with respective signal strengths $\beta_1\geq\beta_2\geq\beta_3$. Let $\hat{u}_1$, $\hat{u}_2$, and $\hat{u}_3$ denote the three dominant left singular vectors of the corresponding data matrix $X_n$. We are interested in how well $\hat{u}_1$, $\mathrm{span}\,(\hat{u}_1, \hat{u}_2)$, and $\mathrm{span}\,(\hat{u}_1, \hat{u}_2, \hat{u}_3)$ can approximate the principal component of interest: $u_1$. The relevant quantity, we argue, is the angle between $u_1$ and the above mentioned vector and subspaces. If the sine of this angle is close to 0, then the principal component is well-contained. As such, the sine of the angle can be considered as a measure of distance between $u_1$ and the respective subspace. 

In Figure~\ref{fig:introductionSimple}, we fix $\beta_1 = 75$ and $\beta_3 = 25$, and we vary the strength of the second signal $\beta_2 \in [\beta_3,\beta_1]$. We plot the distance between $u_1$ and the subspaces $\mathrm{span}\,(\hat{u}_1)$, $\mathrm{span}\,(\hat{u}_1, \hat{u}_2)$, and $\mathrm{span}\,(\hat{u}_1, \hat{u}_2, \hat{u}_3)$. This is repeated for (approximations) to different asymptotic and finite-sample regimes. In the second frame, $p$ and $n$ tend to infinity together, and theoretically the limit of the angle $\theta(u_1,\hat{u}_1)$ is only dependent on $n$, $p$, $\sigma$, and $\beta_1$ and independent from $\beta_2$ and $\beta_3$, provided $\beta_1\neq\beta_2$ (see ~\cite[Thm. 4]{Paul2007}). In the same asymptotic regime, yet under the assumption $\beta_1 = \beta_2 > \beta_3$, it is impossible to distinguish between $u_1$ and $u_2$. We can only use $\image(\hat{u}_1,\hat{u}_2)$ to approximate $\image(u_1,u_2)$. This phenomenon is displayed in the figure as $\sin\theta(u_1,\hat{u}_1)$ is independent from $\beta_2$ for a large portion of the graph. The inconsistency for values of $\beta_2$ close to $\beta_1$ can be explained by the finiteness of $p$ and $n$. The third panel aims to approximate the classical asymptotic regime. As $n\to\infty$,  $\sin\theta(u_1,\hat{u}_1)\to 0$. In the figure, the angle is dependent on $\beta_2$ yet consistently small (note the log-scale).

Our main regime of interest is displayed in the first panel of Figure~\ref{fig:introductionSimple}. The distance between $u_1$ and $\hat{u}_1$ deteriorates rapidly as $\beta_2$ increases. Indeed, it is intuitive that, if $\beta_1/\beta_2$ is close to 1, $\hat{u}_2$ would contain information on $u_1$ and $\hat{u}_1$ on $u_2$. Visually, this can be seen in the red, dotted curve, which contains the distance between $u_1$ and $\mathrm{span}\,(\hat{u}_1, \hat{u}_2)$. This distance is consistently small. This experiment shows that when $\beta_1\approx \beta_2$ the error in the leading sample principal component $\hat u_1$ can be large but a larger subspace such as $\image(\hat{u}_1,\hat{u}_2)$ still contains the leading signal $u_1$ with high accuracy. 

Figure~\ref{fig:introductionCos} visualizes the same phenomenon with a slightly different set-up. Here, there are $k=9$ signals with strengths $\beta_1=10,\beta_2=9,\dots,\beta_9 = 2$. We plot the cosine of the angle between $u_1$ and the $j$th sample principal component $\hat{u}_j$, which is close to 1 when the distance between $u_1$ and $\hat{u}_j$ is small. In either asymptotic regime (bottom two frames), we see that $\hat{u}_1$ is very close to $u_1$ and that all further sample principal components $\hat{u}_2,\hat{u}_3,\dots$ contain hardly any components in the direction of $u_1$. However, in the finite sample regime in the top panel, trailing sample components contain large components in the direction of $u_1$. It is immediately clear that, in this context, $\mathrm{span}\,(\hat{u}_1, \hat{u}_2)$ contains much more information on $u_1$ than just $\hat{u}_1$.

\begin{figure}
    \centering
    \includegraphics[width = 0.8\linewidth]{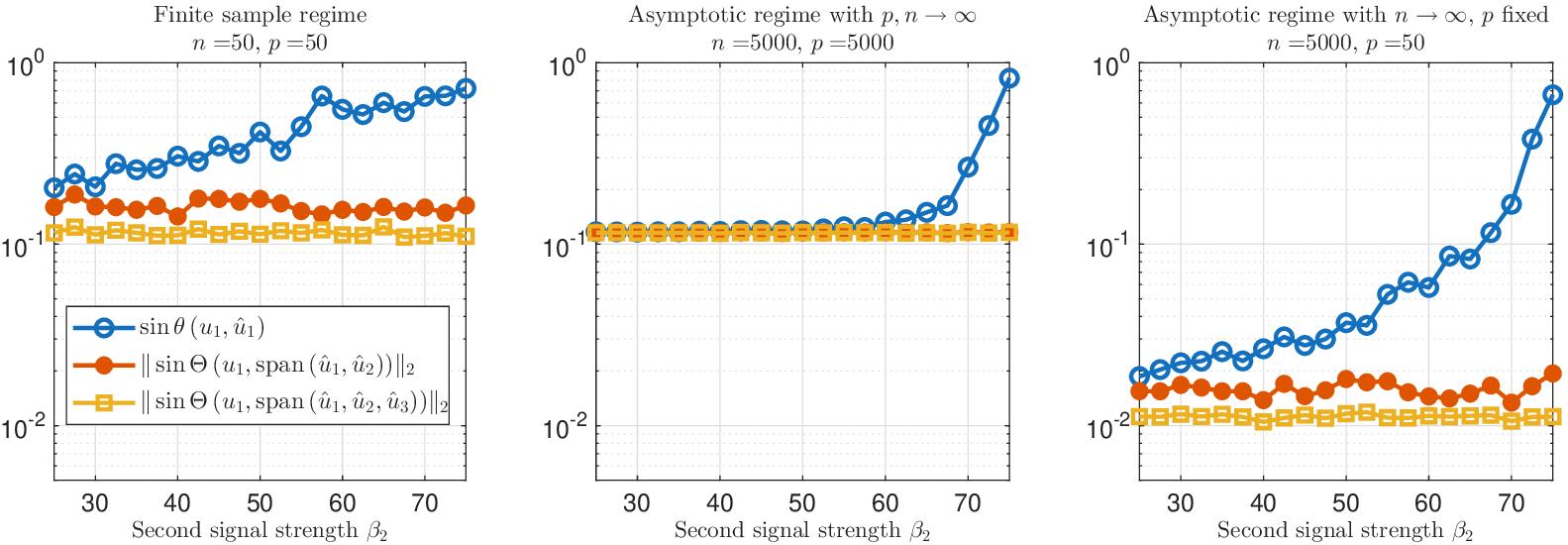}
    \caption{ \small The distance between the dominant signal $u_1$ and spaces spanned by the dominant principal components $\hat{u}_1$, $\hat{u}_2$, and $\hat{u}_3$ for various finite sample and asymptotic regimes. The plotted quantity is the sine of the angle(s) between the dominant population component $u_1$ and a subspace spanned by different sample components $\hat{u}_j$. A small quantity indicates $u_1$ is well-contained in the corresponding subspace. In each of the plots, $\beta_1 =75$ and $\beta_3 = 25$ are kept constant and only $\beta_2$ is changed. The lines show the mean of 20 iterations.} 
    \label{fig:introductionSimple}
\end{figure}

\begin{figure}
    \centering
    \includegraphics[width = 0.7\linewidth]{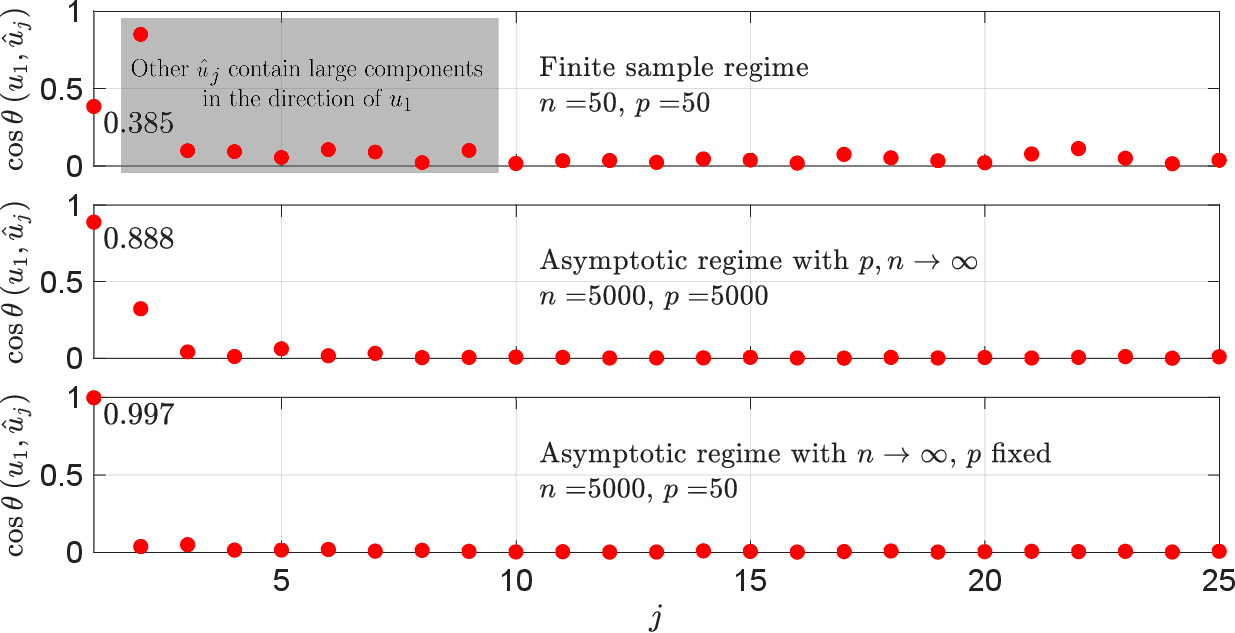}
    \caption{ \small How much do sample components point in the direction of a specific principal component? Here, $\beta_1 = 10$, $\beta_2 = 9, \dots, \beta_9 = 2$, and $\sigma^2 = 1$.}
    \label{fig:introductionCos}
\end{figure}

\subsection{Contributions}
In this paper we investigate the accuracy of the sample principal components in the finite sample setting where the data contains multiple signals ($k>1$). More precisely, we introduce
\begin{itemize}
    \item new bounds on the angle between the population principal components and a subspace spanned by sample principal components, where the latter can be of larger dimension than the population subspace that is estimated (Section \ref{sec:theory}), and
    \item a new algorithm SuperPCA, which  finds an improved estimate of the leading signal directions in a (potentially larger) candidate subspace $\image(\hat U_\mathrm{cand})$. SuperPCA hinges on the idea of the classical Rayleigh-Ritz (RR) process \cite[Sec. 11.3]{parlettsym}, but 
    crucially, \emph{subsamples} only a carefully chosen set of coordinates of the measurements in the refinement step, which reduces the cost of data acquisition, potentially dramatically (Section \ref{sec:superPCA}).
\end{itemize}
As our numerical experiments (Section \ref{sec:experiments}) show, SuperPCA is particularly effective for highly coherent signal directions and for certain distributions of the signal. We compare our algorithm to standard PCA and to Johnstone and Lu's SparsePCA algorithm \cite{Johnstone2009}.

\subsection{Notation}
Throughout the paper, we assume model \eqref{eq:1.defmodel}. The signal directions are denoted as $u_1,\dots,u_k$ and collected in a matrix $U\in\Re^{p\times k}$. We let $X\in\Re^{p\times n}$ denote the data matrix whose columns are samples, and $X_n= X/\sqrt{n}$ as the scaled data matrix. The singular value decomposition of $X_n = \hat{U}\hat{\Sigma}\hat{V}^T$ is denoted with hats. We denote $\sigma_i$ the singular values of $K_{\mathrm{pop}}$ and $\sigma_i(M)$ the singular values of any matrix $M$. The notation $\Theta(V,W)$, for $V\in\Re^{m\times n}$, $W\in\Re^{m\times l}$, and $m\geq \max(n,l)$, denotes the diagonal matrix of dimension $\min(n,l)$ of canonical angles between the subspaces spanned by $V$ and $W$. 
The norms $\|\cdot\|_2$ and $\|\cdot\|_F$ denote the spectral and Frobenius norm, respectively. We use MATLAB notation to indicate submatrices, e.g. $A(i,:)$ is the $i$th row of $A$.

%% file: 2.LitReview_new.tex
\section{Review of related work}\label{sec:review}
In this section we summarize previous work on the estimation of principal components. 
The existing results are mostly stated in terms of the eigendecompositions; we translate them to an SVD context. Finally we also discuss classical perturbation results for singular vectors.

\subsection{Estimation in the limit \boldmath{$p/n\to\gamma > 0$}}\label{sec:LitReviewAsymp}
The high-dimensional asymptotic regime, in which $p,n\to\infty$ with $p/n\to\gamma>0,$ has been extensively studied using random matrix theory. A cornerstone result is the Marchenko-Pastur law~\cite{Marcenko1967b}, which describes the limiting spectrum of the sample covariance matrix in the absence of signals. The convergence of the largest and smallest sample eigenvalues to the upper~\cite{geman1980limit,johnstone2001distribution} and lower~\cite{bai1999methodologies,bai1993limit,silverstein1985smallest} edges were previously studied, showing that the sample eigenvalues are more dispersed than the population eigenvalues~\cite{johnstone2018pca}. 
For the spiked covariance model~\eqref{eq:1.defmodel} introduced by Johnstone~\cite{johnstone2001distribution}, Baik, Ben Arous and Péché~\cite{baik2005phase} and Baik and Silverstein~\cite{Baik2006} established the BBP phase transition: only spikes exceeding a critical threshold, namely $\sigma_i^2/\sigma^2 > 1 + \sqrt{\gamma}$, separate from the bulk spectrum (predicted by the Marchenko-Pastur law) and can be consistently detected.

A similar phase transition holds for the sample principal components. Paul \cite{Paul2007} and subsequent work \cite{yao2015sample} showed that, above the BBP threshold, the sample eigenvectors retain nontrivial alignment with the population eigenvectors but remain asymptotically biased whenever $\gamma>0$. Below the threshold (e.g. when $\sigma_1^2/\sigma^2 \leq 1 + \sqrt{\gamma}$), the sample and population eigenvectors become asymptotically orthogonal. These results motivate the search for improved estimators in the finite-sample regime considered in this work.

\subsection{Finite sample estimation}
The context where $p$ and $n$ are both 
finite and of the same order is of growing relevance and attention in light of the rapid growth of the size of datasets available today, 
however theoretical results are still significantly less developed for this context. Paul's work~\cite{Paul2007} proves that, for finite $p,n$, sample principal components will contain a non-informative noise part that obstruct good estimation of the signals as in the asymptotic case.

Nadler (2008)~\cite{Nadler2008} was among the first to investigate the finite sample case in more detail. He specifically considered the single signal case ($k=1$) using eigenvalue and -vector perturbation theory and derives upper and lower bounds on the relevant quantities in terms of random variables. 
A simplified version of the main result --- neglecting small terms and assuming the signal is sufficiently strong --- is
$\sin\theta(u_1,\hat{u}_1) \lesssim \frac{\sigma}{\sqrt{\beta_1}}\sqrt{\frac{p}{n}} + \mathcal{O}(\sigma^2).$

There have been a few other results in the finite sample setting in the last decade. Koltchinskii and Lounici~\cite{koltchinskii2017normal} and Reiss and Wahl~\cite{reiss2020nonasymptotic} 
derive error bounds on $\sin \theta (u_j,\hat{u}_j)$ and the general $k>1$ case $\sin \Theta (U_1,\hat{U}_1)$, where $U_1$ and $\Uh_1$ contain $r\leq k$ principal components respectively, which depend on the trace of $K_{\mathrm{pop}}$ and involve a spectral gap of the form $\sigma_j^2-\sigma_{j+1}^2$ (respectively $\sigma_{r}^2-\sigma_{r+1}^2$) in their denominator. 
To be informative, their bounds require roughly that $n\gg p\sigma^2$~\cite{koltchinskii2017normal} or $n\sim p^2$~\cite{reiss2020nonasymptotic}, and that the spectral gap $\sigma_j^2-\sigma_{j+1}^2$ is sufficiently large. 
Vaswani and Narayanamurthy~\cite{vaswani2017finite} obtain error bounds on $\|\sin\Theta(U_1,\hat{U}_1)\|_2$ that behave as $\sim\sqrt{(\beta_1/\beta_r)(\sigma^2/\beta_r)(p\log p /n)}$, which again includes spectral gaps as $\sigma^2/\beta_r$ and a $p/n$ term.
In all these references, the dimensions of $U_1$ is the same as that of $\hat U_1$. Hence a common feature of these results is that the error bounds depend on spectral gaps between adjacent singular values, making them less informative when several leading signals have similar strengths. Our analysis addresses this limitation by estimating signals using a subspace of larger dimension than the target population subspace.

\subsection{Matrix perturbation theory for singular vectors}\label{subsec:wedin}
Our analysis of the error in the sample principal components hinges on classical perturbation theory for invariant subspaces. Indeed we can view the sample matrix $S_n$ as a perturbed version of $K_{\mathrm{pop}}$ or, in the SVD formulation, the data matrix $X_n$ is a noisy version of the signal part of ~\eqref{eq:1.defmodel}. For this task, Davis and Kahan \cite{dk70} established perturbation bounds for eigenspaces of symmetric matrices, while Wedin \cite{Wedin1972} obtained analogous results for singular vectors.

A key feature of Wedin's theorem is that the perturbation bound depends on a spectral gap separating the desired singular subspace from the remaining spectrum. However, when several signal strengths are close together such that this gap becomes small, Wedin's theorem becomes less informative. The main idea of this paper is to enlarge the approximation subspace, replacing the adjacent spectral gap by a larger separation that yields substantially sharper finite-sample bounds.

%% file: TheoryAngles_new.tex
\section{Main theoretical results: estimation of principal components using a larger subspace} \label{sec:theory}
We now quantify the accuracy with which the leading population signal subspace can be approximated by a larger sample singular subspace.
Throughout, let $U_1\in \R^{p \times r_1}$ denote the leading population signal directions and let $[\Uh_1, \Uh_2]\in \R^{p\times r_2}$, with $\Uh_1\in \R^{p\times r_1}$ and $r_2>r_1$, denote the leading left singular vectors of the data matrix $X_n$. The approximation error is measured using canonical angles between these subspaces, which measure how much of the smaller subspace is contained in the larger one. We use the standard definition for subspaces of unequal dimensions \cite[Section I.5]{MPT1990}: 
the canonical angles between 
$U_1\in \R^{p\times r_1}$ and $[\Uh_1,\Uh_2]\in \R^{n\times r_2}$ are defined as
\begin{equation}
    \Theta(U_1,[\Uh_1,\Uh_2]) = \mathrm{diag}(\theta_1,\theta_2,...,\theta_{r_1}), \text{ where }\theta_i = \arccos (\sigma_i (U_1^T[\Uh_1,\Uh_2])).
\end{equation}
We further denote $\Uh_3$ an orthogonal complement of $[\Uh_1,\Uh_2]$; that is, $\Uh_3^T[\Uh_1,\Uh_2] = 0$.
Noting that $\sin^2\theta+\cos^2\theta = 1$ and that $\left\|U_1^T[\Uh_1,\Uh_2]\right\|_2^2 +\left\|U_1^T\Uh_3\right\|_2^2=1$, a direct consequence of the definition above is that, 
\begin{equation}
    \left\|\sin \Theta (U_1,[\Uh_1,\Uh_2])\right\|_2 = \left\|U_1^T\Uh_3\right\|_2,
\end{equation}
which is the quantity that we will use to measure the error made when we approximate the principal components $U_1$ by $\image([\Uh_1,\Uh_2])$.

We are now ready to state our theoretical results. 
First, we establish an upper bound for the case when there is only one signal of interest ($r_1=1$) by considering a subspace spanned by the leading left singular vectors of the data matrix; see Theorem \ref{thm:onesigbound}. Then, we extend our result to the case of multiple signals in Theorem \ref{thm:multsigbound}.

\subsection{Estimating one signal}
We derive a deterministic bound for the canonical angle between the dominant signal and the subspace spanned by the leading left singular vectors of the data matrix $X_n$. The proof of the theorem below can be found in Section \ref{sec:proofs}.

\begin{theorem}[Single signal case] \label{thm:onesigbound}
    Let $X_n = \frac{1}{\sqrt{n}}X=\frac{1}{\sqrt{n}}[x_1,x_2,...,x_n] \in \Re^{p\times n}$ be the scaled data matrix where the data matrix $X$ is as defined in \eqref{eq:1.defmodel}. Then assuming $\sigma_1(X_n) > \sigma_{r_1+1}(X_n)$,
    \begin{equation} \label{eq:onesigbound}
        \sin\theta(u_1,\Uh_1) \leq \sqrt{\frac{\sigma_1(X_n)^2-\norm{u_1^TX_n}_2^2}{\sigma_1(X_n)^2-\sigma_{r_1+1}(X_n)^2}}
    \end{equation} where $u_1$ is the dominant signal direction and $\Uh_1 \in \Re^{p\times r_1}$ is the matrix containing the $r_1$ leading left singular vectors of $X_n$.
\end{theorem}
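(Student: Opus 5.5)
Write $X_n = \Uh\Sigh\Vt^T$ for the full SVD of $X_n$, with singular values $\sigh_1=\sigma_1(X_n)\ge\sigh_2\ge\dots$, and let $\Uh_\perp$ be an orthonormal basis for the orthogonal complement of $\image(\Uh_1)$. The argument is a Rayleigh-quotient (energy-splitting) one applied to the unit vector $u_1$. By the definition of canonical angles given at the start of this section, $\sin\theta(u_1,\Uh_1)=\|\Uh_\perp^Tu_1\|_2$. I would expand $u_1$ in the left singular basis, writing $c_i = \hat u_i^T u_1$, so that $\sum_i c_i^2 = 1$. Here the sum runs over all $p$ left singular vectors, padding with zero singular values if $p>n$. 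With this notation, $\sin^2\theta(u_1,\Uh_1)=\sum_{i>r_1} c_i^2$.

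Next I compute the quantity appearing in the numerator:
\[
\norm{u_1^TX_n}_2^2 = u_1^T X_nX_n^T u_1 = \sum_i \sigh_i^2 c_i^2 .
\]
I then split this sum at the index $r_1$. For $i\le r_1$, bound $\sigh_i^2\le\sigh_1^2$. For $i>r_1$, bound $\sigh_i^2\le\sigh_{r_1+1}^2$. Writing $s^2=\sum_{i>r_1}c_i^2$, this gives
\[
\norm{u_1^TX_n}_2^2 \le \sigh_1^2(1-s^2) + \sigh_{r_1+1}^2 s^2 .
\]
Rearranging yields
\[
s^2\bigl(\sigh_1^2-\sigh_{r_1+1}^2\bigr)\le \sigh_1^2-\norm{u_1^TX_n}_2^2 .
\]
The hypothesis $\sigma_1(X_n)>\sigma_{r_1+1}(X_n)$ lets us divide by the gap. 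Taking square roots then gives \eqref{eq:onesigbound}. Note that the numerator is automatically nonnegative, since $\|u_1^TX_n\|_2\le\sigh_1$.

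I do not expect a genuine obstacle. The result is deterministic, and no properties of the model \eqref{eq:1.defmodel} are needed: it holds for any unit vector $u_1$ and any matrix $X_n$. The only care points are the following.
\begin{itemize}
\item When $p>n$, the left singular vectors must be completed to a full orthonormal basis of $\Re^p$, with zero singular values for the extra directions.
\item If singular values tie at index $r_1$, $\Uh_1$ should be any valid choice of leading singular vectors; the bound holds for each such choice.
\end{itemize}
A remark worth adding is that the bound is informative precisely because the denominator is the gap $\sigh_1^2-\sigh_{r_1+1}^2$ rather than an adjacent gap such as $\sigh_1^2-\sigh_2^2$. Meanwhile, the numerator is small whenever $u_1$ nearly attains the top singular value of $X_n$.
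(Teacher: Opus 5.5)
Your proof is correct and is essentially the paper's argument. Both write $\norm{u_1^TX_n}_2^2$ as the sum of its contributions from the leading $r_1$ left singular directions and from the remaining ones, bound these by $\sigh_1^2\cos^2\theta$ and $\sigh_{r_1+1}^2\sin^2\theta$ respectively (with $\theta=\theta(u_1,\Uh_1)$), and rearrange. The paper reaches this split through the change of basis $[U,U_\perp]^T$, a right rotation to block lower-triangular form, and the resulting identities $\norm{b_1}_2\vt_j=\ut_j\Sigh_j$ for $j=1,2$. Your expansion $u_1^TX_nX_n^Tu_1=\sum_i\sigh_i^2c_i^2$ gets there in one line and makes explicit that the bound is deterministic and holds for any unit vector and any matrix.
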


The bound for $r_1>1$ is useful in the multiple signals case $(k>1)$ when the dominant signal $u_1$ is (almost) indistinguishable from the non-dominant signal(s), that is, the largest signal strength and the second largest signal strength are similar. By taking a larger subspace of dimension $r_1>1$, $\sigma_1(X_n)^2-\sigma_{r_1+1}(X_n)^2$ can be substantially larger than $\sigma_1(X_n)^2-\sigma_{2}(X_n)^2$, making the bound \eqref{eq:onesigbound} more informative.
Furthermore,
\begin{equation*}
    \sigma_1(X_n) = \norm{\tilde{u}_1^T X_n}_2 \approx \norm{u_1^TX_n}_2,
\end{equation*} where $\tilde{u}_1$ is the leading singular vector of the data matrix, which implies a small numerator.

\subsection{Estimating multiple signals}
We now extend Theorem \ref{thm:onesigbound} to estimating multiple population principal components. In particular, we consider the quantity $\norm{\sin\Theta(U_1,[\Uh_1,\Uh_2])}_2$ where $U_1,\Uh_1 \in \R^{p\times r_1}$ and $\Uh_2\in \R^{p\times (r_2-r_1)}$. The orthonormal matrix $U_1$ is the dominant $r_1$ signal directions and $[\Uh_1,\Uh_2]$ is the leading $r_2$ left singular vectors of the data matrix $X_n$. The result is presented below in Theorem \ref{thm:multsigbound} with its proof in Section \ref{sec:proofs}.

\begin{theorem} \label{thm:multsigbound}
    Let $X_n =\frac{1}{\sqrt{n}}X = \frac{1}{\sqrt{n}} [x_1,x_2,...,x_n]\in \R^{p\times n}$ be the scaled data matrix where the data matrix $X$ is as defined in \eqref{eq:1.defmodel}. Let $U_1\in\mathbb{R}^{p\times r_1}$ be the $r_1$ leading signal directions with $U_\perp\in\mathbb{R}^{p\times (p-r_1)}$ as its orthogonal complement, and let $[\Uh_1,\Uh_2]$ be the leading $r_2(\geq r_1)$ left singular vectors of the data matrix. 
    Then assuming $\sigma_{\min}(U_1^TX_n) > \sigma_{r_2+1}(X_n)$,
    \begin{equation} \label{eq:multisigbound}
        \norm{\sin\Theta(U_1,[\Uh_1,\Uh_2])}_2 \leq \frac{\norm{U_\perp^T X_n \Vc}_2 \sigma_{\min}(U_1^TX_n)}{
        \sigma_{\min}(U_1^TX_n)^2 - \sigma_{r_2+1}(X_n)^2},
    \end{equation} where $\Vc$ is the leading $r_1$ right singular vectors of $U_1^TX_n$, which is independent of $U_\perp^TX_n$.
\end{theorem}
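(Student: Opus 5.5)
Write $\Uh_3$ for an orthonormal complement of $[\Uh_1,\Uh_2]$, so that by the identity above the target is $\norm{\sin\Theta(U_1,[\Uh_1,\Uh_2])}_2=\norm{U_1^T\Uh_3}_2$. The plan is a Wedin-type argument in which the ``unperturbed'' singular triplet is taken from $U_1^TX_n$, not from a population matrix. Let $U_1^TX_n=\Uc\Sigc\Vc^T$ be its thin SVD, with $\Uc\in\R^{r_1\times r_1}$ orthogonal, $\Sigc$ diagonal with smallest entry $\sigma_{\min}(U_1^TX_n)$, and $\Vc\in\R^{n\times r_1}$. Set $\Xc:=U_1\Uc\in\R^{p\times r_1}$. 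Then $\image(\Xc)=\image(U_1)$, and $\norm{U_1^T\Uh_3}_2=\norm{\Xc^T\Uh_3}_2$ because $\Uc$ is orthogonal.

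First I compute the residual of $(\Xc,\Sigc,\Vc)$ as an approximate singular triplet of $X_n$. By construction $\Xc^TX_n=\Uc^TU_1^TX_n=\Sigc\Vc^T$, so the left equation holds exactly. For the right equation, split $X_n=U_1U_1^TX_n+U_\perp U_\perp^TX_n$ to get
\begin{equation*}
X_n\Vc=U_1\Uc\Sigc+U_\perp U_\perp^TX_n\Vc=\Xc\Sigc+R,\qquad R:=U_\perp U_\perp^TX_n\Vc,
\end{equation*}
with $\norm{R}_2=\norm{U_\perp^TX_n\Vc}_2$.

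Next I project onto the complementary singular subspace. Write $X_n^T\Uh_3=\Vh_3\Sigh_3$, where $\norm{\Sigh_3}_2=\sigma_{r_2+1}(X_n)$; more precisely $\Uh_3^TX_n=\Sigh_3\Vh_3^T$ with $\Vh_3^T\Vh_3=I$, and this follows from the SVD of $X_n$ by taking $\Uh_3$ to be the trailing left singular vectors. Put $A:=\Uh_3^T\Xc$ and $B:=\Vh_3^T\Vc$. Multiplying $X_n\Vc=\Xc\Sigc+R$ on the left by $\Uh_3^T$ gives
\begin{equation*}
\Sigh_3 B=A\Sigc+\Uh_3^TR .
\end{equation*}
Transposing the exact left equation and multiplying by $\Vh_3^T$ gives
\begin{equation*}
\Vh_3^TX_n^T\Xc=\Sigh_3^T A=B\Sigc .
\end{equation*}
Combining the two yields $\Sigh_3\Sigh_3^TA=\Sigh_3B\Sigc=A\Sigc^2+\Uh_3^TR\Sigc$, that is,
\begin{equation*}
A\Sigc^2-\Sigh_3\Sigh_3^TA=-\Uh_3^TR\Sigc .
\end{equation*}
This is a Sylvester equation whose two coefficient matrices have spectra separated by the gap $\sigma_{\min}(U_1^TX_n)^2-\sigma_{r_2+1}(X_n)^2>0$, which is positive by hypothesis. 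The standard Sylvester bound (as used in Wedin's theorem, \cite{Wedin1972}) then gives $\norm{A}_2\le \norm{R}_2\norm{\Sigc}_2/\mathrm{gap}$.

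The main obstacle is the factor $\norm{\Sigc}_2$: the statement has $\sigma_{\min}(U_1^TX_n)$ in the numerator, whereas the Sylvester bound naively produces $\sigma_{\max}$. To fix this I use a column-wise or scalar argument. Take a unit vector $y$ achieving $\norm{Ay}_2=\norm{A}_2$. Better, work with $M=A\Sigc$, which satisfies $M\Sigc^2-\Sigh_3\Sigh_3^TM=-\Uh_3^TR\Sigc^2$, or rescale the equation as
\begin{equation*}
A-\Sigh_3\Sigh_3^TA\Sigc^{-2}=-\Uh_3^TR\Sigc^{-1}.
\end{equation*}
The function $s\mapsto s/(s^2-\tau^2)$ is decreasing for $s>\tau$, so column-scaling arguments (diagonalize, treat each singular value $s_i$ of $\Sigc$ separately) bound each column contribution by $s_i/(s_i^2-\tau^2)\le \sigma_{\min}/(\sigma_{\min}^2-\tau^2)$, where $\tau=\sigma_{r_2+1}(X_n)$. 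Concretely, I would expand $A=\sum_{j\ge0}(\Sigh_3\Sigh_3^T)^jT\Sigc^{-2j}$ with $T=-\Uh_3^TR\Sigc^{-1}$, bound it by a Neumann series $\sum_j\tau^{2j}\norm{R}_2\,\sigma_{\min}^{-2j-1}=\norm{R}_2\sigma_{\min}/(\sigma_{\min}^2-\tau^2)$, and conclude using $\norm{\Uh_3^TR}_2\le\norm{U_\perp^TX_n\Vc}_2$. Finally, the independence remark in the statement follows because $U_1^TX_n$ and $U_\perp^TX_n$ are independent Gaussian blocks under \eqref{eq:1.defmodel}, by rotational invariance of the isotropic noise.
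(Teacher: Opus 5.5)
Your proof is correct, and its skeleton is the paper's. With $\widehat V_3$ the trailing right singular vectors of $X_n$, your $A=\Uh_3^T\Xc$ and $B=\widehat V_3^T\Vc$ are exactly the transposes of the paper's blocks $\Uc_{31}$ and $\Vc_{31}$. Your relations $B\Sigc=\Sigh_3^TA$ and $\Sigh_3B=A\Sigc+\Uh_3^TR$ are the transposed first-block equations the paper reads off from the rotated matrix $\left[\begin{smallmatrix}\Sigc & 0\\ \Xt_2\Vc & \Xt_2\Vc_\perp\end{smallmatrix}\right]$. That block lower triangular form is just a coordinate version of your observation that the left residual vanishes and the right residual is $U_\perp^TX_n\Vc$. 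Working coordinate-free also lets you skip the paper's appeal to Lemma~\ref{lem:triangle}, which is trivial when $r_*=r_1$.

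The only real difference is the final step. You eliminate $B$, which puts $\Sigc$ into the right-hand side of a Sylvester equation and creates the $\sigma_{\max}$ obstacle. You then remove it with the Neumann series $A=\sum_{j\ge0}(\Sigh_3\Sigh_3^T)^jT\Sigc^{-2j}$. This is valid, since convergence and uniqueness follow from $\tau=\sigma_{r_2+1}(X_n)<\sigma_{\min}(U_1^TX_n)$, and it sums to exactly the stated bound.

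The paper never meets the obstacle. It takes norms of the two relations separately, $\sigma_{\min}\norm{B}_2\le\tau\norm{A}_2$ and $\sigma_{\min}\norm{A}_2\le\tau\norm{B}_2+\norm{R}_2$, and substitutes the first into the second, which gives the bound in one line. Your series is simply the iterated form of that substitution.

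One caution: the column-by-column idea you float before committing to the series would not by itself give a spectral-norm bound, because bounding each column of $A$ does not control $\norm{A}_2$. It is the series (or the two-inequality substitution) that actually closes the argument.
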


The bound above is useful when the largest $r_1$ signal strength is similar in magnitude to the $(r_1+1)$th strongest signal strength. Then by taking a larger subspace of dimension $r_2$ such that $\sigma_{r_2+1} = \sqrt{\beta_{r_2+1}+\sigma^2}\approx \sigma_{r_2+1}(X_n)$ is sufficiently smaller than $\sigma_{r_1} = \sqrt{\beta_{r_1}+\sigma^2}\approx \sigma_{\min}(U_1^TX_n)$, the estimate $[\Uh_1,\Uh_2]$ is likely to contain important information about the dominant signal directions $U_1$.
\begin{remark}\*
    Since $\Vc$ is independent of $U_\perp^TX_n$, 
    \begin{equation*}
        U_\perp^T X_n \Vc \overset{d}{=}  \diag(\sigma_{r_1+1},...,\sigma_p)G_{(p-r_1)\times r_1}
    \end{equation*}
    where $\sigma_j = \sqrt{\beta_j+\sigma^2}$ and $G_{(p-r_1)\times r_1}$ is a $(p-r_1)\times r_1$ standard Gaussian matrix. We also have
        $U_1^T X_n \overset{d}{=} \diag(\sigma_1,...,\sigma_{r_1}) G_{r_1 \times n}$ and
        $U_\perp^T X_n \overset{d}{=} \diag(\sigma_{r_1+1},...,\sigma_{p}) G_{(p-r_1) \times n}$.
    Therefore, roughly,
    \begin{equation} \label{eq:roughLimit}
        \frac{\norm{U_\perp^T X_n \Vc}_2 \sigma_{\min}(U_1^TX_n)}{
        \sigma_{\min}(U_1^TX_n)^2 - \sigma_{r_2+1}(X_n)^2} \approx \frac{\sigma_{r_1}\sigma_{r_1+1}(\sqrt{p-r_1}+\sqrt{r_1})(\sqrt{n}+\sqrt{r_1})}{(\sqrt{n}+\sqrt{r_1})^2\sigma_{r_1}^2- (\sqrt{n}+\sqrt{p})^2\sigma_{r_2+1}^2},
    \end{equation} which can be made more precise by a result by Davidson and Szarek \cite{DavidsonSzarek2001}, given that the denominator is positive. When $n\rightarrow \infty$ with $p$ fixed, the right-hand side of \eqref{eq:roughLimit} goes to zero and when $p,n \rightarrow \infty$ with $p/n \rightarrow c$, the right-hand side of \eqref{eq:roughLimit} tends to 
    \begin{equation*}
        \frac{\sigma_{r_1}\sigma_{r_1+1}\sqrt{c}}{\sigma_{r_1}^2- (1+\sqrt{c})^2\sigma_{r_2+1}^2}
    \end{equation*} assuming that the denominator remains positive, that is, $\sigma_{r_1}> (1+\sqrt{c})\sigma_{r_2+1}$.

\end{remark}

To illustrate the accuracy of our bounds, in Figure ~\ref{fig:bounds_u1} we plot the error and the bounds from Theorems \ref{thm:onesigbound} and \ref{thm:multsigbound} over the data sample size $n$. The experiment shows that both theorems explain well the trend of the error. We note that the bounds are only informative when smaller than 1.

\begin{figure}[ht]
    \centering
    \includegraphics[width=0.8\linewidth]{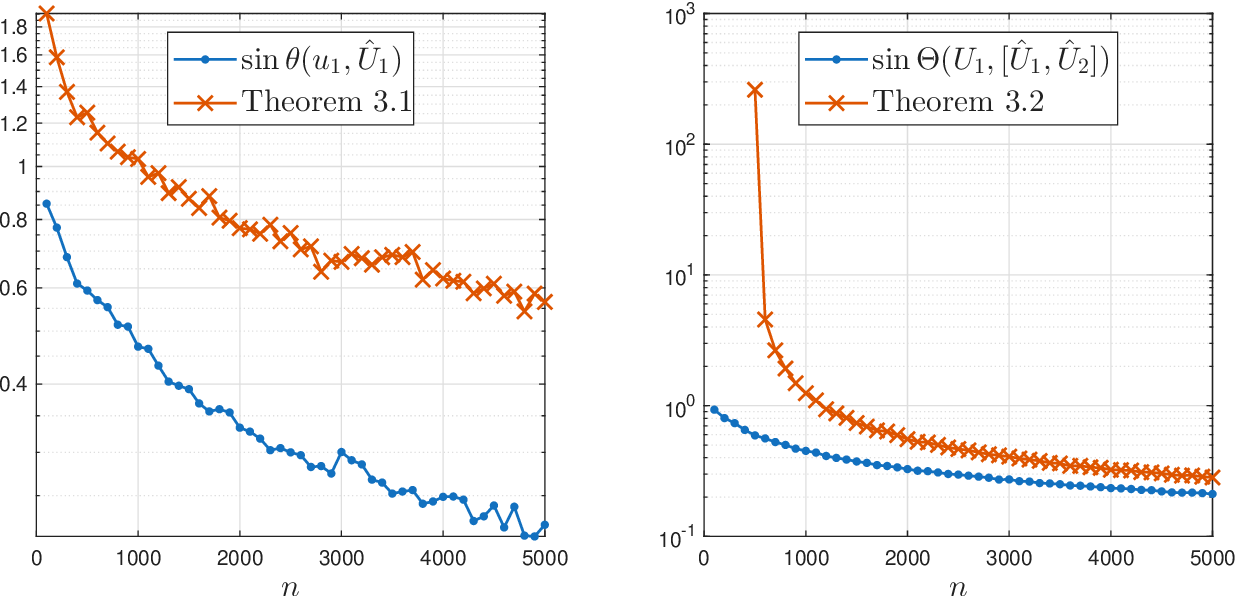}
    \caption{
    Error and bounds from Theorems \ref{thm:onesigbound} and \ref{thm:multsigbound}. Here the data is of dimension $p=1000$, we take $k=10$ signals of strengths $\beta_i = (1000-i^3)/200+1$, and set $\mathrm{dim}\,(\hat U_1)=5, \mathrm{dim}\,([\hat U_1,\hat U_2])=10$. The noise level is $\sigma=1$. The bounds and the error were averaged over 10 runs. }
    \label{fig:bounds_u1}
\end{figure}

%% file: SuperPCA_new.tex
\section{SuperPCA: subsampling to extract high-accuracy principal components from a subspace} \label{sec:superPCA}
The results of Section \ref{sec:theory} suggest that the leading signal can often be recovered more accurately from a larger candidate subspace than from the leading sample principal component alone. In this section, we propose an algorithm to refine a large singular
subspace down to the desired dimension using very limited additional data acquisition. In this way, we exploit the information in the large subspace twofold: firstly, by searching only within that subspace for good principal components and secondly, by using it to decide what coordinates of the measurements are most rich in information.

Let $\Uh_{\mathrm{cand}}\in\mathbb{R}^{p\times r_2}$ be a candidate subspace. Our goal is to compute $r_1$ vectors $\tilde u_1,\tilde u_2, \ldots,\tilde u_{r_1}\in \image(\Uh_{\mathrm{cand}})$ that approximate the leading population principal components $U_1\in \R^{p\times r_1}$ more accurately than the initial guess $\Uh_{\mathrm{cand}}(:,1:r_1)$ while acquiring only a small number of additional measurements. 

The candidate subspace $\Uh_{\mathrm{cand}}$ may be obtained from an initial PCA computation (since, as seen in the previous sections, this gives a good candidate) or from prior information. For example, $\Uh_{\mathrm{cand}}$ may come from prior knowledge, such as information that has been acquired from related systems or previous experiments. Throughout this section we only assume that it contains useful information about the desired signal. The quality may be high or low; that is, $\|\sin\Theta(U_1,\Uh_{\mathrm{cand}})\|$ may range from very small, say $10^{-10}$, to relatively large, for example $0.1$, the latter being perhaps the more realistic situation where the signal-to-noise ratio is low.

It is worth mentioning that the most basic approach to improve the estimate for the principal component would be to take  additional samples $X_2$ and take the SVD of $[X_1, X_2]$. 
Clearly this is the classical PCA strategy. Note that this approach does not output a solution in the span of $\Uh_{\mathrm{cand}}$, and therefore its output can be better than $\|\sin\Theta(U_1,\Uh_{\mathrm{cand}})\|$. We shall see however that in many cases SuperPCA outperforms PCA in terms of the number of measurements needed, especially when $p\gg r_2$.

\subsection{Rayleigh-Ritz PCA}
Assume we take $N$ additional measurements from model \eqref{eq:1.defmodel} and collect them in a data matrix $X_2$. 
To estimate the leading principal components, 
we propose to project the newly acquired data $X_2$ onto the candidate subspace and perform PCA in that subspace, namely it suffices to perform PCA on $\Uh_{\mathrm{cand}}^TX_2$. 
This leads us to introducing Algorithm \ref{alg:RRPCA}, which we call Rayleigh-Ritz PCA (RR-PCA) due to its link to the Rayleigh-Ritz process~\cite[Ch.~11]{parlettsym} as we explain below. To the best of our knowledge, RR-PCA is novel and may merit further investigation in its own right.
\begin{algorithm}
  \caption{\small RR-PCA.
Inputs: initial candidate subspace $\Uh_{\mathrm{cand}}$ (obtained from $n$ samples or prior knowledge of the model), additional sample size $N$, and number of desired principal components $r_1$}.
  \label{alg:RRPCA}
  \begin{algorithmic}[1]
  \small
\STATE Take $N$ additional measurements in \emph{all} coordinates $X_2\in\mathbb{R}^{p\times N}$. 
\STATE Compute $\Uh_{\mathrm{cand}}^TX_2$ and its SVD $\Uh_{\mathrm{cand}}^TX_2=U_N\Sigma_NV_N^T$. 
\STATE Output the $r_1$ leading vector(s) of $\Uh_{\mathrm{cand}}U_N$. 
  \end{algorithmic}
\end{algorithm}

Applying RR-PCA to the data $X_2$ corresponds mathematically to finding the leading left singular vectors of $\Uh_{\mathrm{cand}}\Uh_{\mathrm{cand}}^TX_2$, which is an orthogonal projection of $X_2$ onto $\Uh_{\mathrm{cand}}$. Projection is the key idea of subspace methods for eigenvalue problems~\cite{saadbookeig}, and related randomized algorithms such as the popular Randomized SVD~\cite{halko2011finding}. 
Equivalently, RR-PCA finds the (leading) eigenvalues and eigenvectors of the matrix $(\Uh_{\mathrm{cand}}\Uh_{\mathrm{cand}}^T X_2)(\Uh_{\mathrm{cand}}\Uh_{\mathrm{cand}}^T X_2)^T= \Uh_{\mathrm{cand}}\Uh_{\mathrm{cand}}^T(X_2 X_2^T)\Uh_{\mathrm{cand}}\Uh_{\mathrm{cand}}^T$, which is the projection of the covariance matrix $X_2X_2^T$ onto $\Uh_{\mathrm{cand}}$. Therefore RR-PCA applies the classical Rayleigh-Ritz (RR) process~\cite[Ch.~11]{parlettsym} to the covariance matrix $X_2X_2^T$. RR is known to be the optimal strategy for extracting approximate eigenvectors from a candidate subspace in a number of senses~\cite[Ch.~11]{parlettsym}. As $N\rightarrow \infty$, the leading eigenvectors of the sample covariance matrix $X_2X_2^T$ converge to the population principal components, and RR-PCA converges to the best approximation of the population principal components contained in $\image(\Uh_{\mathrm{cand}})$.

From a practical viewpoint, however, RR-PCA is not very attractive because it still requires us to sample the whole dataset $X_2$ and therefore offers no reduction in measurement cost compared to classical PCA. \footnote{An exception to this argument would be a situation where 
one is able to make further measurements in the direction of $\Uh_{\mathrm{cand}}^T$ at the cost proportional to $r_2$ rather than $p$. This seems restrictive, and we shall not make such assumptions in what follows.} 
That is why we next introduce an alternative strategy that exploits coordinate sampling.

\subsection{SuperPCA}
Note that we can reinterpret RR-PCA as a least-squares problem. When we perform the orthogonal projection $\Uh_{\mathrm{cand}}\Uh_{\mathrm{cand}}^TX_2$ as described above, we are effectively solving the least-squares problem (with many right-hand sides)
 \begin{equation}   \label{eq:U_1M}
\min_{M\in\mathbb{R}^{r_2\times N}}\|\Uh_{\mathrm{cand}}M-X_2\|_F. 
 \end{equation}
The solution to this problem is $M_*=(\Uh_{\mathrm{cand}})^\dagger X_2 = \Uh_{\mathrm{cand}}^TX_2$, where $(\cdot)^\dagger$ denotes the Moore-Penrose pseudoinverse. Then $X_2$ can be approximated by $\Uh_\mathrm{cand}M_*$, and by taking the dominant subspace of $\Uh_{\mathrm{cand}}M_*$ we recover the RR solution.

Note that~\eqref{eq:U_1M} is a $p\times r_2$, highly overdetermined least-squares problem ($p \gg r_2$), therefore instead of solving~\eqref{eq:U_1M} exactly, we can approximate its solution with the subsampled least squares problem 
 \begin{equation}   \label{eq:sketdchU_1M}
\min_{\tilde M\in\mathbb{R}^{r_2\times N}}\|S(\Uh_{\mathrm{cand}}\tilde M-X_2)\|_F, 
 \end{equation}
where $S\in\mathbb{R}^{s\times p}$ is a \emph{subsampling} matrix, i.e. whose rows have only one entry equal to $1$, and $0$ elsewhere. Let $\mathcal{I}\subseteq \{1,2,\ldots,p\}$ denote the indices chosen by $S$. We require $|\mathcal{I}|=:s>r_2$, and usually $s\ll p$. 
This yields the SuperPCA algorithm, presented in Algorithm~\ref{alg:superPCA}.

\begin{algorithm}[htb]
  \caption{\small SuperPCA. 
Inputs: initial sample size $n$, additional sample size $N$, and subspace dimension $r_2$ or a threshold $\tau\in (0,1)$. Optionally: approximate subspace $\Uh_{\mathrm{cand}}\in\mathbb{R}^{p\times r_2}$ (if $\Uh_{\mathrm{cand}}$ is given, start from step 3).
}
  \label{alg:superPCA}
  \begin{algorithmic}[1]
  \small
\STATE Take $n$ measurements to obtain $X_1\in\mathbb{R}^{p\times n}$. 
\STATE Compute the SVD $X_1=\Uh\hat\Sigma \hat V^T$, and let 
$\Uh_{\mathrm{cand}}=\Uh(:,1:r_2)$. If $r_2$ is not given, let $r_2$ be smallest such that $\sigma_{r_2+1}(X)/\sigma_{1}(X)\leq \tau$.
\STATE Find ``important'' row indices 
$\mathcal{I}\subseteq \{1,2,\ldots,p\}$ of $\Uh_{\mathrm{cand}}$ 
via repeated and reweighted QRCP; see Section \ref{sec:subsamplingMethod}. 
\STATE Take $N$ more measurements $\tilde X_2\in\mathbb{R}^{s \times N}$, only in the indices $\mathcal{I}$.
\STATE Solve the least-squares problem with Tikhonov regularization $\min_M\|\Uh_{\mathrm{cand}}(\mathcal{I},:)M-\tilde{X}_2\|^2_F + \lambda\|M\|^2_F$ via the QR factorization of $[\Uh_{\mathrm{cand}}(\mathcal{I},:); \sqrt{\lambda}I_r]$ (see Section \ref{subsec:reg}).
\STATE Find the economical SVD $M=U_M\Sigma_MV_M^T$. 
\STATE Output the $r_2$ leading vectors of $\Uh_{\mathrm{cand}}U_M\in\mathbb{R}^{p \times r_2}$.
  \end{algorithmic}
\end{algorithm}

Solving \eqref{eq:sketdchU_1M} has two important advantages. First, it is much faster than solving \eqref{eq:U_1M}, while giving solutions that are almost as good as \eqref{eq:U_1M}. In fact, there is now a rich body of theory that justifies solving~\eqref{eq:sketdchU_1M} often yields solutions of comparable quality to \eqref{eq:U_1M}. For example, with leverage-score sampling (where $S$ includes an importance-sampling weighting) one has $\|\Uh_{\mathrm{cand}}\tilde M_*-X_2\|_F\leq (1+\epsilon)\|\Uh_{\mathrm{cand}}M_*-X_2\|_F$, where $\epsilon$ is the so-called subspace embedding constant~\cite{drineas2012fast}, for which a typical value is say $1/2$. Thus if $\Uh_{\mathrm{cand}}$ captures the signal well such that $\|\Uh_{\mathrm{cand}}M_*-X_2\|_F$ is small, the subsampled solution has a good fit with small $\|\Uh_{\mathrm{cand}}\tilde M_*-X_2\|_F$. 
Solving heavily overdetermined least-squares problems via subsampling is a fundamental idea that has been successfully used in e.g. model order reduction~\cite{chaturantabut2010nonlinear} and numerical linear algebra~\cite{jones2025subapsnap}.

The second advantage of SuperPCA is much more fundamental. To get the solution of \eqref{eq:sketdchU_1M}, SuperPCA only needs new measurements on the sampled coordinates $\mathcal{I}$. Consequently, the measurement cost is reduced by approximately a factor $p/s \gg 1$. 
Indeed, we are assuming that taking measurements at a subset $|\mathcal{I}|=s$ of indices can be done with cost proportional to $s$. 
Such situation is common in e.g. model order reduction~\cite{chaturantabut2010nonlinear}, nonetheless
it is important to acknowledge that this is an assumption rather than a fact, and in some applications it may be equally expensive to obtain $S X_2$ as it is to obtain $X_2$.

The remainder of this section discusses two practical ingredients of SuperPCA: the choice of sampled coordinates $\mathcal{I}$, 
and the regularization used in solving the sketched least-squares problem.

\subsection{Selecting subsampling indices} \label{sec:subsamplingMethod}
\subsubsection{QRCP with iterative reweighting}
A number of algorithms are available for choosing a subset of columns (or rows) that are ``important'' from a given matrix. 
These include classical algorithms in numerical linear algebra such as QR with column pivoting (QRCP) or Gaussian elimination with row pivoting~\cite[Ch.~3,5]{golubbook4th}, \cite{dong2023simpler}. Simpler methods  include uniform random sampling or sampling based on the column norms; however they are known to fail for difficult problems. Another option is the use of leverage score sampling~\cite{mahoney2011randomized} for which extensive research has been performed. Other choices include the randomly pivoted Cholesky algorithm~\cite{chen2022randomly} and determinantal point process~\cite{kulesza2012determinantal}. 
 
The method we advocate here is motivated by the observation that QRCP  combines excellent performance and speed in practice~\cite{dong2023simpler,drmac2016new}, despite the fact that in the worst case it can be exponentially far from the optimal choice of indices. 
We adopt an iterative and reweighting process introduced in~\cite{park2025accuracy}, which selects the rows as follows: first perform QRCP as usual to obtain the $r_2$ indices $\mathcal{I}$, then compute the SVD of the submatrix $\Uh_{\mathrm{cand}}(\mathcal{I},:)=U_{\mathcal{I}}S_{\mathcal{I}}V_{\mathcal{I}}^T$.  
We then compute the matrix 
$\Uh_{\mathrm{cand}}^{(2)} := \Uh_{\mathrm{cand}}(\mathcal{I}^C,:)V_{\mathcal{I}}S_{\mathcal{I}}^{-1}$, where $\mathcal{I}^C$ denotes the complement indices of $\mathcal{I}$ in $\{1,2,\ldots, p\}$. 
We then run QRCP on $\Uh_{\mathrm{cand}}^{(2)}$ to get $r_2$ additional indices. This process can be repeated. 
The idea is that we wish to emphasize the directions that we have failed to capture so far. The $V_{\mathcal{I}}$-multiplication gets the matrix in the right coordinates, and the further $S_{\mathcal{I}}^{-1}$-multiplication promotes the directions not yet captured, as described in~\cite{park2025accuracy}.

As shown in~\cite{park2025accuracy}, this method leads to a relatively large value of $\sigma_{\min}(\Uh_1(\mathcal{I},:))$, which is the quantity that controls the suboptimality factor of the least-squares fit~\cite{drmac2016new,jones2025subapsnap}. 
We observe that this method performs well, in particular often better than leverage score sampling in that it results in a larger value of $\sigma_{\min}(\Uh_1(\mathcal{I},:))$  for the same cardinality $|\mathcal{I}|$. Comparing the accuracy achieved by SuperPCA with leverage score sampling and with the sampling method described above we observed that the above method is more robust. With leverage scores sampling, if the subsampled least squares problem is weighted by the inverse square roots of the leverage scores as is usually done, then SuperPCA often did not improve the estimate of the leading signal, however if we do not use weights then the accuracy is comparable to the subsampling method above with the condition that the candidate subspace $\Uh_1$ is accurate enough.

\subsubsection{Coherence of $\Uh_{\mathrm{cand}}$ and number of selected coordinates}
The coherence of a matrix with orthonormal columns measures how strongly its column space is aligned with the coordinate axes~\cite{ipsen2014effect}. In particular, for $\Uh_{\mathrm{cand}}\in\mathbb{R}^{p\times r_2}$, we define the coherence as 
\[
\mu(\Uh_{\mathrm{cand}})
=
\frac{p}{r_2}
\max_{i=1,\ldots,p}
\|\Uh_{\mathrm{cand}}(i,:)\|_2^2.
\]
The coherence satisfies
$1 \leq \mu(\Uh_{\mathrm{cand}}) \leq \frac{p}{r_2}$.
When $\mu(\Uh_{\mathrm{cand}})$ is close to $1$, the matrix is said to be \textit{incoherent}, meaning that its row norms are relatively uniform and its column space is not strongly concentrated along any individual coordinate direction. In contrast, high coherence indicates that the column space is strongly aligned with a small number of coordinate directions.
Coherence therefore plays an important role in determining the accuracy of a subsampled least-squares solution as an approximation to the full least-squares solution~\cite{drineas2006sampling}. In SuperPCA, the coherence of $\Uh_{\mathrm{cand}}$ consequently influences the number of rows $s$ that must be selected to obtain an accurate approximation, as we demonstrate in Section~\ref{sec:experiments}.

\subsection{Regularization}\label{subsec:reg}
\subsubsection{Tikhonov regularization}
We suggest to include Tikhonov regularization in solving the least squares problem ~\eqref{eq:sketdchU_1M}. 
The regularized solution is then obtained by solving the least squares problem 
\begin{equation}\label{eq:LSreg}
    \min_{M\in\mathbb{R}^{r_2\times N}} \|(S \Uh_{\mathrm{cand}})M-SX_2\|_F^2  + \lambda \|M\|_F^2.
\end{equation}
We found experimentally that regularization substantially improves the robustness of SuperPCA when the signal-to-noise ratio is moderate or when the candidate subspace dimension is chosen larger than the number of dominant signals. 
The regularization parameter $\lambda$ is selected automatically using the classical L-curve criterion \cite[sec. 3.6.4]{bjorckBook}\cite{hansenOLeary}, which balances the residual norm against the solution norm. In practice, we maximize the curvature of the L-curve (whose analytic expression can be computed as shown in~\cite{hansen1999curve}) using MATLAB's function ${\tt fminbnd}$. Other automatic regression parameter rules (for example GCV) are possible.

\subsubsection{Regularization and choice of the dimension of the candidate subspace} 
In real applications, the number of signals $k$ is unknown, therefore to define the size $r_2$ of the candidate subspace in SuperPCA the number of signals $k$ needs to be estimated. Classical approaches to estimate the number of principal components include heuristic methods such as the scree plot \cite{cattell1966scree} and explained-variance thresholds, statistical approaches such as Horn’s parallel analysis \cite{horn1965rationale}, and information-theoretic criteria such as AIC and MDL \cite{wax1985detection, nadakuditi2008sample}. In high-dimensional statistics, the data is often modeled by \eqref{eq:1.defmodel} and several practical methods to estimate the number of components were derived using random matrix theory \cite{nadakuditi2008sample,kritchman2009non}. These methods are fundamentally linked to the BBP phase transition \cite{baik2005phase}.

In our experiments we have set $\mbox{dim}(\Uh_{\mathrm{cand}}) = r_2=k$.
Assuming the data follows the spiked covariance model, if the signal strengths are well above the noise $\beta_i > \sigma^2$ then the singular values of the initial sample $X_1$ that we use to obtain $\Uh_{\mathrm{cand}}$ have an important spectral gap $\sigma_1 - \sigma_{k+1}$. 
Note that using $r_2$ large may be unnecessary if only the first principal component is desired, however it improves the guarantees provided in our theoretical results and experiments confirm that it is not harmful to take $r_2 \approx k$ for example.

On the other hand, if one wishes to approximate many principal components then SuperPCA may benefit from taking $r_2>k$. Indeed, similarly to the situation in Figure \ref{fig:introductionCos}, the sample components $i>k$ may contain information about some desired population principal components. In this situation, regularization plays an important role: without Tikhonov regularization if $r_2>k$, if the signal is weak (e.g. $\beta_1 \approx \sigma^2$) or $n$ is small (e.g. $n=300$ for $p=1000,k=10,\beta_i = (1000-i^3)/200+1$) then SuperPCA gives an inaccurate principal component.
Indeed, when $r_2>k$, we observe that the subsampled $S\Uh_{\mathrm{cand}}$ has additional singular values that are considerably smaller than $1$, so taking the pseudo-inverse of $S\Uh_{\mathrm{cand}}$ may amplify the noise in the data $X_2$. In this situation Tikhonov regularization with the L-curve method for the choice of regularization parameter ensures that SuperPCA is robust and improves the estimation of the principal component.

\subsection{SuperPCA vs. Sparse PCA}
SuperPCA is related to sparse PCA~\cite{zou2006sparse, Johnstone2009, truncatedPower, journee2010generalized,kumar2024oja}, but the underlying assumptions are different. Sparse PCA assumes that the population principal components are sparse in the given coordinate system and seeks to recover their support together with the principal components themselves. By contrast, SuperPCA makes no sparsity assumption on the columns of $U_1$. Instead, it assumes the availability of a candidate subspace containing the desired signals and uses this subspace to select informative coordinates for subsequent measurements.

When the population principal components are sparse or approximately sparse, the candidate subspace is naturally concentrated on a small number of coordinates. In this case, SuperPCA selects essentially the same coordinates as sparse PCA~\cite{Johnstone2009} and therefore benefits from similar reductions in measurement cost. However, SuperPCA remains applicable even when the signals are not sparse, provided the candidate subspace is sufficiently informative.

Unlike iterative sparse PCA methods such as the truncated power method~\cite{truncatedPower}, GPower~\cite{journee2010generalized}, or sparse Oja's algorithm~\cite{kumar2024oja}, SuperPCA selects the measurement coordinates only once and then refines the principal components by solving a sketched least-squares problem. Nevertheless, the power iteration algorithm is tightly connected to RR~\cite[Ch. 11]{parlettsym}, therefore we acknowledge the similarity between these methods and SuperPCA.

%% file: ExperimentsSPCA.tex
\section{Numerical experiments} \label{sec:experiments}
In this section we illustrate the behavior of the SuperPCA algorithm through numerical experiments, first on synthetic data then on a real data set. Using synthetic data, we test the SuperPCA algorithm for different distributions of the signal strength $\beta_i$ and different structures of the population principal components $U_1$. 
Further, we illustrate that splitting optimally the budget in terms of the number of measurements between the initial data $X_1$ and the subsampled data $SX_2$ is not straightforward and depends notably on the distribution of the signal.

\paragraph{Synthetic experiments set-up}
In the experiments with synthetic data, we define the matrix $U_1$ containing the population principal components as the orthogonal factor of the thin QR decomposition of the matrix $[C_{\mathrm{big}}\times G_1; G_2]$, where $G_1 \in \R^{n_{\mathrm{big}}\times k}$ and $G_2\in \R^{(p-n_{\mathrm{big}})\times k}$ are Gaussian matrices, $n_{\mathrm{big}}\leq p$ and $C_{\mathrm{big}}$ is a real number larger than $1$ (whose value is fixed in the experiments below). 
We generate the sample $X_1$ as in the model \eqref{eq:1.defmodel}, with dimension $p=1000$ and $k=10$ signals, and we subsample the follow up data $X_2$ only at $s$ rows. Unless stated otherwise, we set the noise level to $\sigma=1$.


\subsection{Refinement of the leading principal component starting from a candidate subspace \boldmath{$\Uh_1$}} \label{subsec:experiments1}
First, we investigate the effect of additional measurements on the accuracy of the approximate leading principal component computed with SuperPCA.
We consider a fixed candidate subspace $\Uh_1\in \R^{p\times r}$ obtained from applying PCA to a synthetic sample $X_1$ (following \eqref{eq:1.defmodel} with the parameters defined above) of size $n=10p$ and retaining the $r=10$ leading principal components of $X_1$. We look at the error $\sin \theta (u_1,u^*)$ as the number of columns $N$ of $SX_2$ increases. We denote $\hat u_1=\Uh_1(:,1)$ the initial guess. In the figures, we compare the error from SuperPCA with that of RR-PCA and classical PCA for the same number of measurements. Therefore, since classical PCA and RR-PCA use the entire data vectors, they are applied to data matrices that have $sN/p$ columns instead of $N$ columns, which is indicated in the legends by ``small sample''. The data $X_1$ used to compute $\Uh_1$ is also included in the subsampled data $SX_2$.

In practice the subspace $\Uh_1$ could be obtained from prior knowledge of the system or a previous computation of the principal components. For example, in the context of model order reduction~\cite{chaturantabut2010nonlinear}, PCA is performed on a snapshot of states of the system to find a small dimensional subspace that capture well the dynamics of the system. If the system evolves, then this subspace may be updated using SuperPCA with the new snapshot in place of $X_2$.

\subsubsection{Effect of the signal distribution} 
We first show how SuperPCA performs for different distributions of the signal strengths on three examples. We define the matrix of population principal components $U_1$ as set up above, with $C_{\mathrm{big}}=1000$ and $n_{\mathrm{big}}=20$. In this way the signals are rather close to being sparse, in the sense that most components in $U_1$ are very small compared to the components of the $20$ most important rows. We find that since $20$ rows of $\Uh_1$ are much larger than the rest, subsampling $s=20$ rows of $X_2$ is sufficient and gives the best accuracy in SuperPCA for a given value of $N$. Therefore we choose to subsample $s=2r=20$ rows in SuperPCA.

We experiment with three signal distributions:
\begin{itemize}   
    \item Slowly decaying leading signal strengths (Figure~\ref{fig:closeSigs}): we define $\beta_i = (1000-i^3)/200+1$, such that the three leading signals have nearly the same strength. 

    \item Uniform signal strengths (Figure~\ref{fig:unifSigs}): we define the signal strengths uniformly in the interval $[2,5]$. 

    \item Exponentially decaying signal strengths (Figure~\ref{fig:expSigs}): we define $\beta_i = 10 e^{-i}+1$. 
    
\end{itemize}

In Figure \ref{fig:label1} we can see the gap between the error in the initial guess $\hat u_1$ (blue dotted line) and the best candidate in $\image (\Uh_1)$ (orange line). In the first two cases the gap is considerable and SuperPCA improves the accuracy significantly as $N\to \infty$. Remarkably, for the same number of measurements, SuperPCA is more accurate than RR-PCA and than classical PCA. A consequence is for example that in Figure~\ref{fig:unifSigs} for the same accuracy, SuperPCA requires almost 100$\times$ less measurements than RR-PCA. Finally, in the example with the exponentially decaying signal strengths, the best candidate in $\image (\Uh_1)$ is not much more accurate than the initial guess and the SuperPCA and RR-PCA algorithms do not improve the estimation faster than the classical PCA algorithm. Since the accuracy that SuperPCA and RR-PCA can achieve is limited by the choice of the search space $\Uh_1$, in Figure~\ref{fig:unifSigs} and Figure~\ref{fig:expSigs} the classical PCA algorithm eventually reaches a higher accuracy than the former. In all the cases above, SuperPCA finds the best candidate for $u_1$ in the subspace $\image(\Uh_1)$ after about $N=10^6$ samples.

Hence, comparing the three examples, we see that SuperPCA is particularly effective when the leading signals have about the same strength and the strength of the following components have a fast decay. We can explain this with the same intuition as in the introduction.

In the first case, since $\beta_1$ and $\beta_2$ are close to each other (both approximately equal to $5.9$), we expect that the initial $\hat u_1$ and $\hat u_2$ both contain large components of the exact $u_1$ and $u_2$ (as explained in the introduction), therefore, for $n$ large enough we expect to have a very good best guess of $u_1$ in the subspace $\image(\Uh_1)$ although the initial $\hat u_1$ may be quite inaccurate. 

Conversely, for exponentially decaying signal strengths, the first and second signal have very different strengths ($\beta_1> \beta_2$), so the classical PCA algorithm already approximates the leading principal component very well, and SuperPCA does not significantly improve on $\hat u_1$ when we approximate $u_1$ alone. However if we wished to approximate signals that are closer to the point where the singular values of the covariance matrix flatten out, for example if we estimate $\mbox{span}([u_1,\ldots, u_5])$, then SuperPCA with a candidate subspace of dimension $r>5$ could lead to larger improvements in the accuracy, similarly to the case with slowly decaying signal strengths.

\begin{figure}[ht]
    \centering

    \subfloat[\label{fig:closeSigs}]{%
        \includegraphics[width=0.32\textwidth]{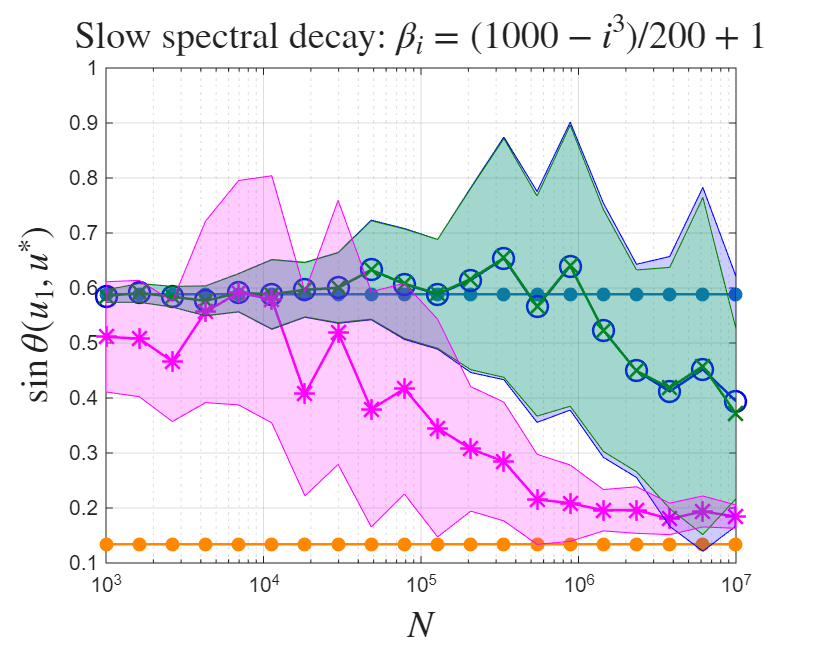}}
    \hfill
    \subfloat[\label{fig:unifSigs}]{%
        \includegraphics[width=0.32\textwidth]{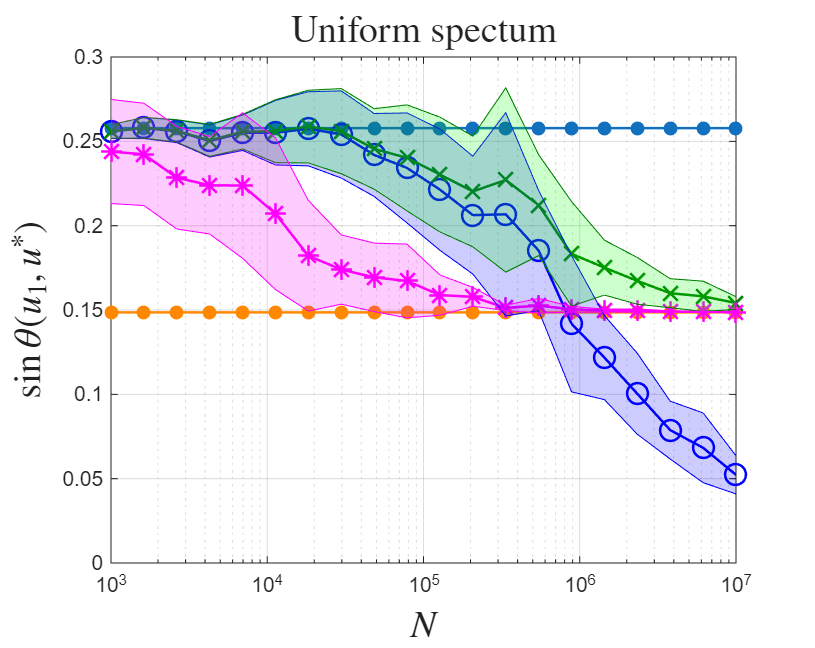}}
    \hfill
    \subfloat[\label{fig:expSigs}]{%
        \includegraphics[width=0.32\textwidth]{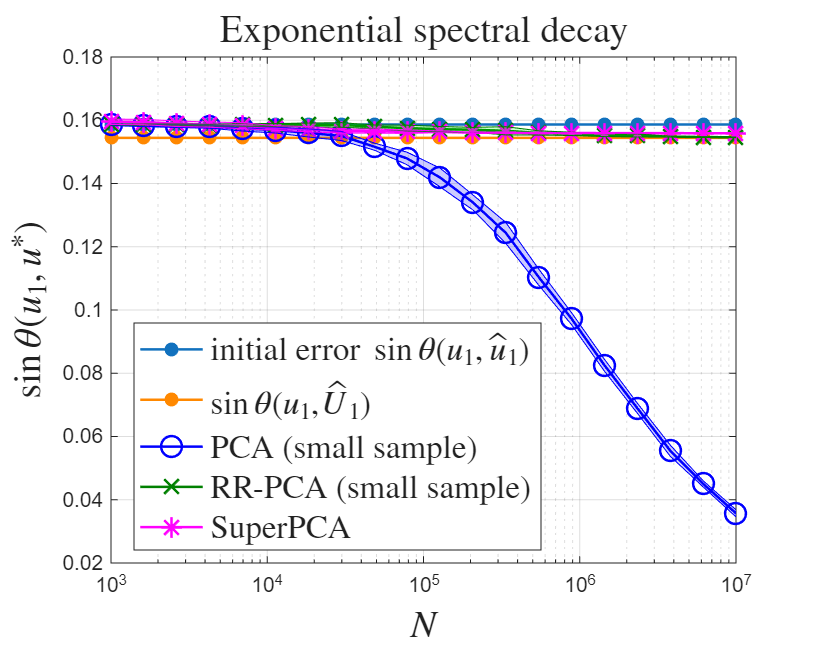}}
     
    \caption{Error in the approximation of the leading principal component $u_1$ for different signal distributions. We average the error over 20 runs and also show the standard deviation for each method.}
    \label{fig:label1}
\end{figure}

It is also worth noting that the RR-PCA method with few data samples is as accurate as classical PCA based on the same $X_2$. Their cost is respectively $\mathcal{O}(nr^2+rnp)$ and $\mathcal{O}(\min(np^2, n^2p))$, so RR-PCA is cheaper and might be a relevant method for subspace tracking in itself.

\subsubsection{Effect of the coherence of $\Uh_1$} 
We now discuss the influence of the coherence of $U_1$ on the accuracy of the SuperPCA algorithm. 
In these experiments, we fix the signal strengths to $\beta_i = (1000-i^3)/200+1, \forall i=1,\ldots,10$, and we define $U_1$ as previously, with $n_{\mathrm{big}}=20$ rows that are weighted by $C_{\mathrm{big}}$. We consider three cases for the value of $C_{\mathrm{big}}$: $1000$, $100$ and $10$, which lead to different values of the coherence of $U_1$. For $C_{\mathrm{big}}=1000$ and $C_{\mathrm{big}}=100$ we sample $s=20$ rows of $X_2$ in SuperPCA, while for $C_{\mathrm{big}}=10$ we choose to sample $s=400$ rows.
The results are presented in Figures \ref{fig:coherence1}, \ref{fig:coherence2} and \ref{fig:coherence3}. In the following discussion, when $C_{\mathrm{big}}\gg 1$ we call the signal \textit{nearly sparse}, in the sense that there are a few dominant coefficients in $U_1$ (and in $\Uh_1$) and the others are near zero.

\begin{figure}[ht]
    \centering

    \subfloat[$C_{\mathrm{big}}=1000, s=20$\label{fig:coherence1}]{%
        \includegraphics[width=0.32\textwidth]{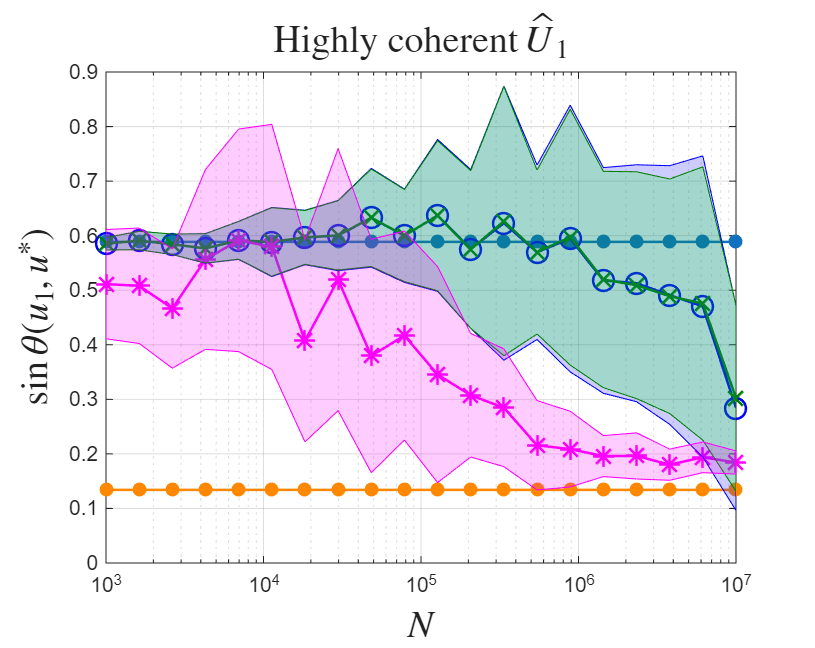}}
    \hfill
    \subfloat[$C_{\mathrm{big}}=100, s=20$\label{fig:coherence2}]{%
        \includegraphics[width=0.32\textwidth]{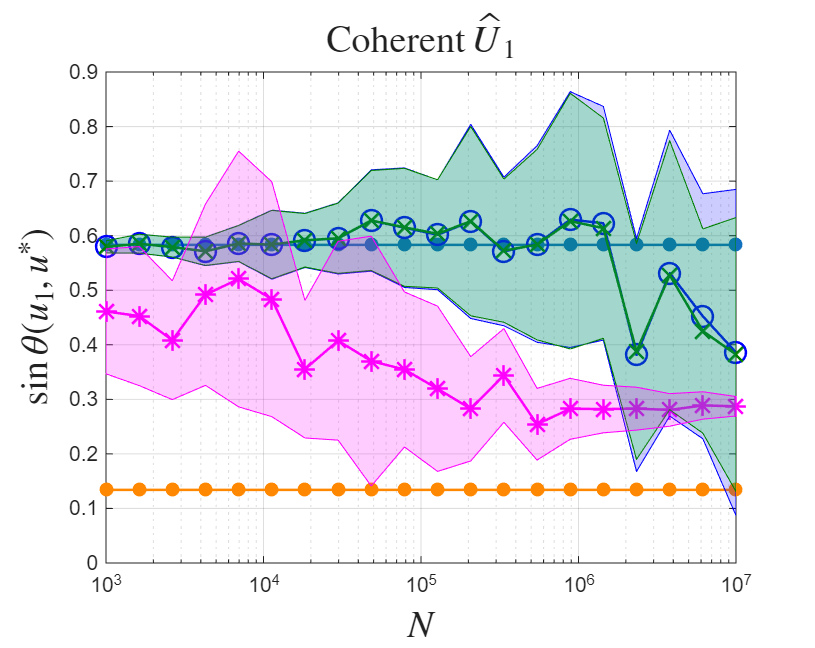}}
    \hfill
    \subfloat[$C_{\mathrm{big}}=10, s=400$\label{fig:coherence3}]{%
        \includegraphics[width=0.32\textwidth]{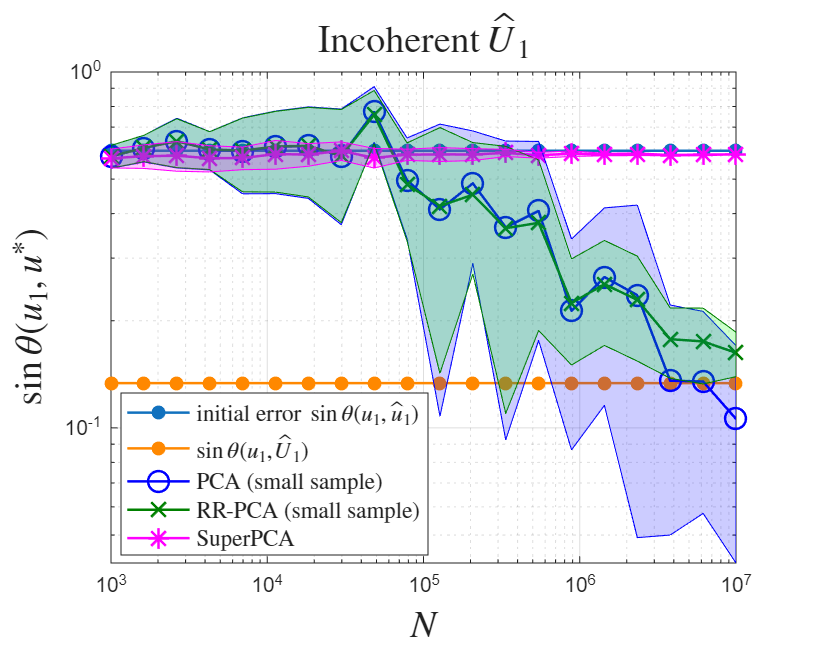}}

    \caption{Error in the approximation of the leading principal component $u_1$ for different structures of the signal. We average the error over 20 runs and also show the standard deviation for each method.}
    \label{fig:closeSigsMod}
\end{figure}

In every figure we see a considerable gap between the accuracy of the initial candidate and that of the best estimate of $u_1$ in $\image (\Uh_1)$. 
In the cases where $C_{\mathrm{big}}=1000$ and $C_{\mathrm{big}}=100$, i.e. in Figures \ref{fig:coherence1} and \ref{fig:coherence2}, the accuracy of SuperPCA improves as $N\to \infty$, coming close to the best accuracy $\sin \theta (u_1,\Uh_1)$, however we see that when $C_{\mathrm{big}}=100$ the accuracy stagnates before reaching $\sin \theta (u_1,\Uh_1)$. In these cases, SuperPCA is much more accurate than RR-PCA and PCA for the same number of measurements. On the other hand, when $C_{\mathrm{big}}=10$, the error in SuperPCA stagnates without improving much compared to the initial guess, while RR-PCA and classical PCA improve the accuracy as $N\to \infty$.

These experiments show that for nearly sparse signal directions SuperPCA improves the accuracy considerably  compared to PCA and RR-PCA for the same number of measurements. Figures \ref{fig:coherence1} and \ref{fig:coherence2} illustrate that the signal is well captured even when subsampling only $s=20$ rows of $X_2$. Remarkably, we observe that for the same accuracy SuperPCA requires ten times fewer measurements than RR-PCA.

On the other hand, when $C_{\mathrm{big}}=10$, the matrix $U_1$ is less coherent, so the error in SuperPCA stagnates and is eventually overtaken by RR-PCA and classical PCA, even if we take $s=400$ in Figure \ref{fig:coherence3}. It is relevant to note that RR-PCA still performs well, which means that $\Uh_1$ is a good candidate subspace that contains a good approximation $u^*\in \image (\Uh_1)$ but this approximation cannot be attained by subsampling only $s=400$ rows. 
Additional experiments indicate that even when only a few rows of $U_1$ are very small, we need to sample all the other rows for SuperPCA to be accurate. A possible explanation for this limitation of the algorithm is that $\sigma_{\min}(S\Uh_1)$ is small if we don't sample all the important rows. For example in the case $C_{\mathrm{big}}=10$, for $s=20$ we get $\sigma_{\min}(S\Uh_1)=0.3$, and for $s=400$ we get approximately $\sigma_{\min}(S\Uh_1)=0.8$. 

However, if the noise level is very small (or equivalently, the signal is very strong) then SuperPCA improves the accuracy of the sample principal component even if the signal directions $U_1$ are incoherent. We show this in Figure \ref{fig:smallNoiseHaar}, where we defined $U_1$ as Haar distributed and the noise level is set to $\sigma=10^{-2}$. It is worth mentioning that for this situation the SuperPCA algorithm performs better without regularization. The L-curve method used in the other experiments over-regularizes the solution, leading to a poor approximation of $u_1$.

Therefore we conclude that, for moderate to weak signals, SuperPCA is very efficient for nearly sparse signal directions (i.e. when $U_1$ has only a few large components and the other components are close to zero), while for very strong signals SuperPCA without regularization is efficient even when the signal directions are dense and incoherent.

\begin{figure}
    \centering
    \includegraphics[width=0.6\linewidth]{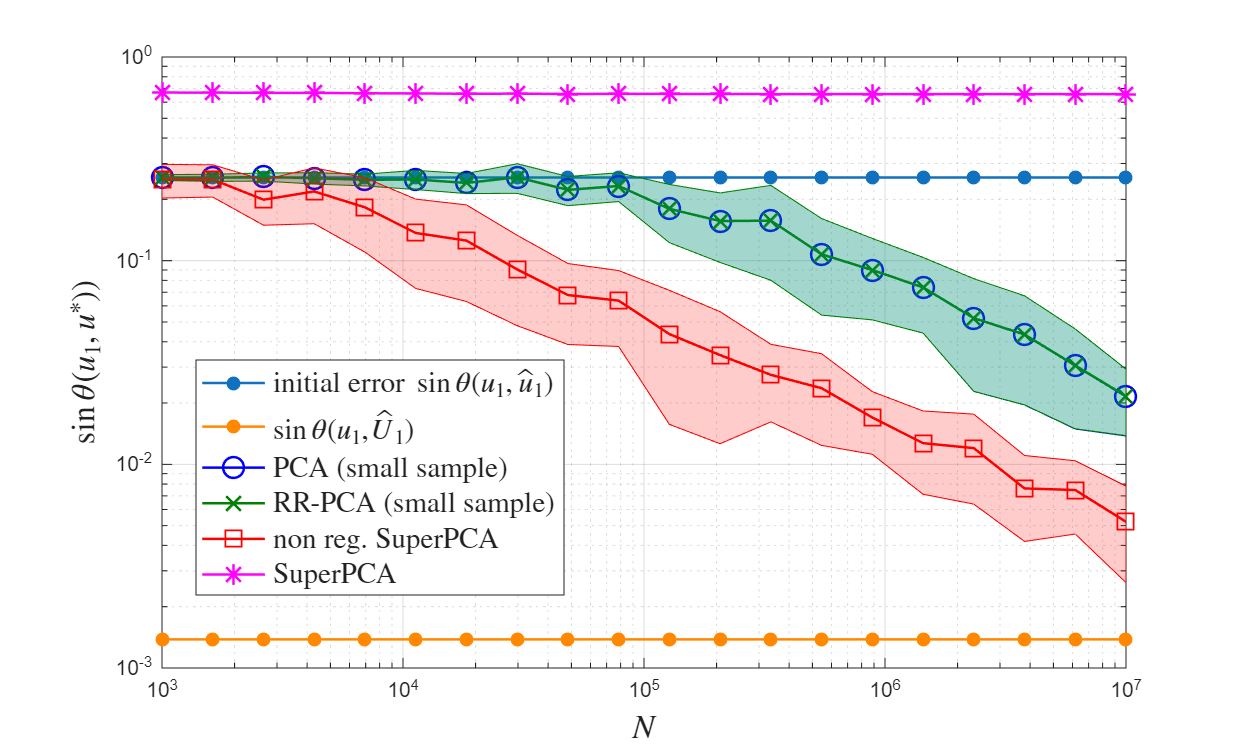}
    \caption{Error for small noise $\sigma=10^{-2}$. We chose uniform signal strengths in the interval $[2,5]$ and the signal directions $U_1$ are Haar distributed (i.e. $C_{\mathrm{big}}=1$). }
    \label{fig:smallNoiseHaar}
\end{figure}

\subsection{Allocation of the available budget in terms of the number of measurements}

The main strength of our method is that once a candidate singular subspace $\Uh_1$ is identified, we do not need to sample the entire data vectors in the follow up data $X_2$, which reduces considerably the cost of the measurements. Therefore one may wonder what portion of the data acquisition budget $B=np+sN$ should be allocated to acquiring $X_1$ and $S X_2$ (the subsampled version of $X_2$) to achieve optimal accuracy.

In Figure \ref{fig:varAlpha1}, we looked at the error in the approximation of $u_1$ over the portion $\alpha:=np/B$ of the budget that is allocated to the initial PCA sample $X_1$ for two values of the fixed total budget $B=10^6, 10^8$, for $\beta_i$ uniformly distributed between $2$ and $5$, then we also included the results obtained with $B=10^8$ and $\beta_i = (1000-i^3)/200+1$. We set $\mbox{dim}(\Uh_1)=10$ and subsample $s=40$ rows of $X_2$ in SuperPCA. 
The exact $U_1$ is defined as set up above with $n_{\mathrm{big}}=20$ rows weighted by $C_{\mathrm{big}}=1000$. For each method for approximating $u_1$, we averaged the error over 50 runs and included the standard deviation.

\begin{figure}[htbp]
    \centering

    \subfloat[$B=10^6$, uniform $\beta_i$\label{fig:budg1}]{%
        \includegraphics[width=0.33\textwidth]{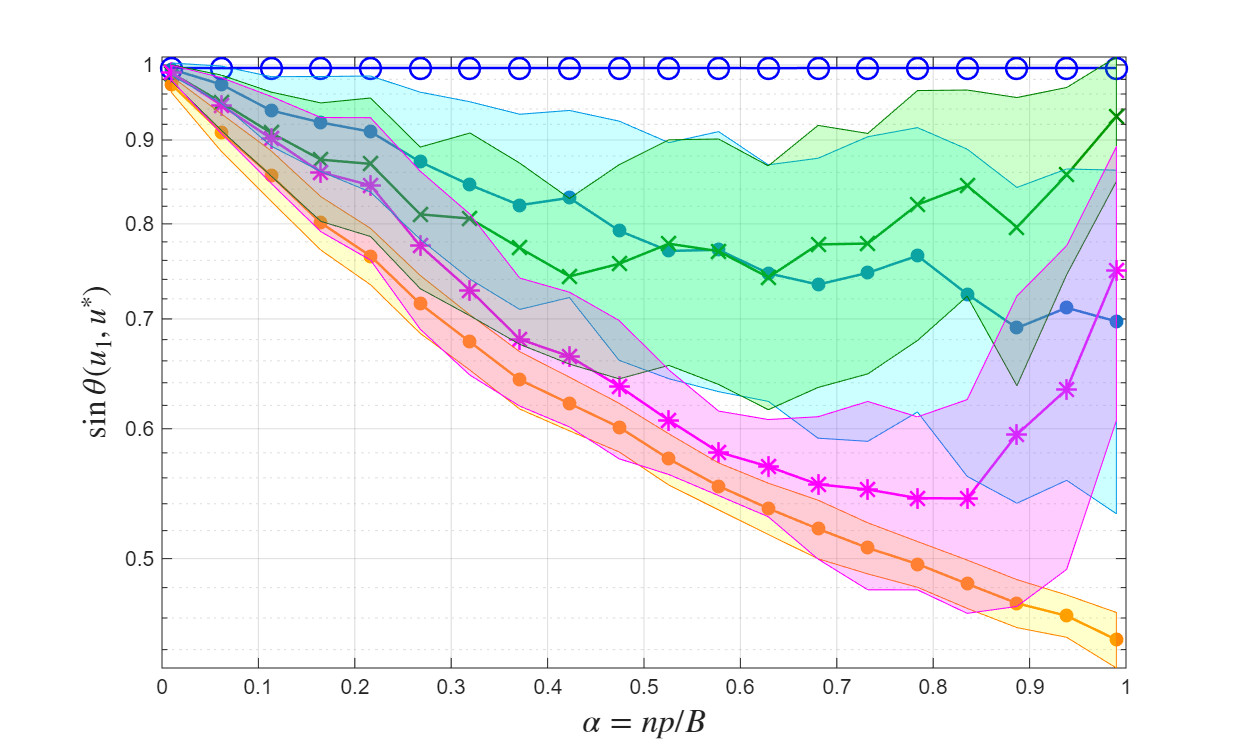}}
    \hfill
    \subfloat[$B=10^8$, uniform $\beta_i$\label{fig:budg2}]{%
        \includegraphics[width=0.33\textwidth]{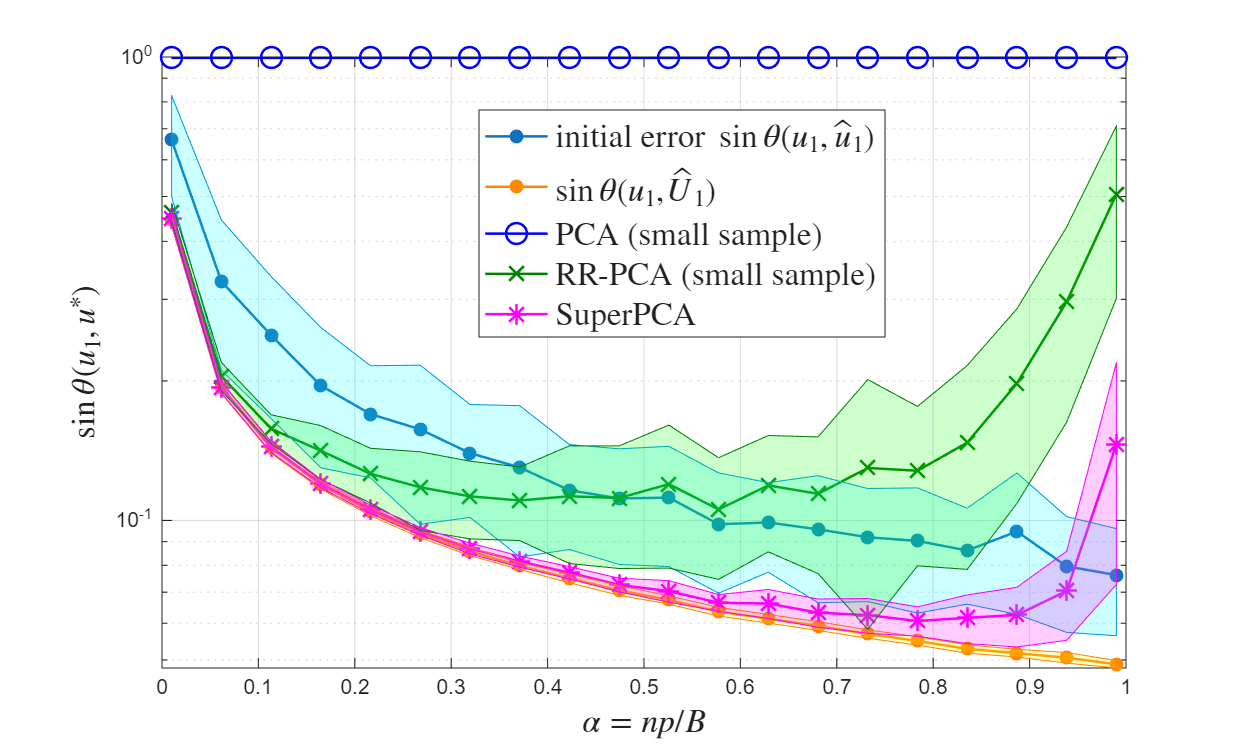}}
    \hfill
    \subfloat[$B=10^{8}, \beta_i = (1000-i^3)/200+1$\label{fig:closeSigs_budg10e8}]{%
        \includegraphics[width=0.33\textwidth]{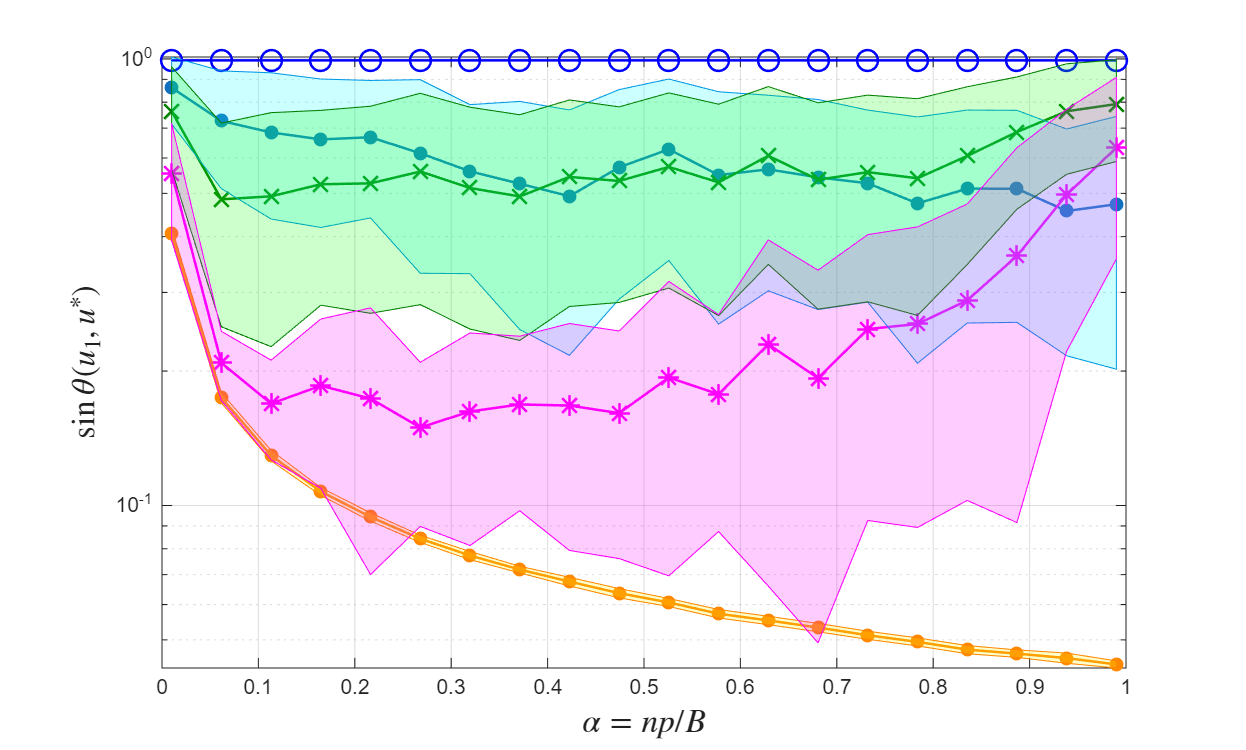}}

    \caption{Error $\sin \theta (u_1,\hat u_1)$ over the portion $\alpha$ of the total budget $B$ allocated to acquiring the data $X_1$ that is used to obtain $\Uh_1$.}
    \label{fig:varAlpha1}
\end{figure}

In Figure~\ref{fig:varAlpha1} we see that for a fixed total budget, the accuracy of the SuperPCA output first improves as $\alpha$ increases, until it reaches a minimum point, then it worsens as $\alpha$ gets closer to 1.

Figures \ref{fig:budg1} and \ref{fig:budg2} show that for uniformly distributed signal strengths the optimal split is to allocate about $70$ to $80\%$ of the measurements to the initial sample $X_1$ and the remainder to acquiring the subsampled $SX_2$ for SuperPCA. This still means that the matrix $SX_2$ is very wide.

However, we observed that the optimal $\alpha$ depends on the distribution of the signal strengths, in particular on the spectral gap $\beta_1-\beta_2$. If the gap is larger the optimal $\alpha$ is closer to 1, i.e. a larger portion of the budget should be invested in the candidate subspace $\Uh_1$, while if the gap is smaller the optimum $\alpha$ is closer to zero. For example in Figure~\ref{fig:closeSigs_budg10e8}, the gap $\beta_1-\beta_2$ is smaller, which gives a smaller value of the optimal $\alpha$ for a total budget $B=10^8$.

\subsection{Application of SuperPCA to MNIST dataset}\label{sec:realApp}
Finally, we applied the SuperPCA algorithm to the MNIST digits dataset \cite{lecun1998gradient}. This dataset contains grayscale images of handwritten numbers from 0 to 9 of size $28 \times 28$ pixels. 
We used the numbers 0, 1 and 2 and defined the data samples by mixing an image of each of these numbers weighted by the strengths 10, 6 and 4 respectively. We generated in this way a matrix $X_{\mathrm{tot}}$ of 1 million samples of dimension $p= 28 ^2 = 784$ and used the leading left singular vectors of this matrix as the population principal components. In this experiment the spectrum of $X_{\mathrm{tot}}$ has a fast decay therefore, as commented in the example with exponentially decaying signal strengths in Section~\ref{subsec:experiments1}, it is a case where it is hard for SuperPCA to do better than classical PCA.

We define the search space $\Uh_1$ by applying classical PCA to 
another set of samples of size $n=50$ and taking its $r=30$ first leading singular vectors. Then we use SuperPCA to approximate the leading 3 principal components, with a follow up data of size $N=233$ generated in the same way as the data samples from $X_{\mathrm{tot}}$. The signals $\Uh_1$ are not very sparse, therefore we choose to subsample $s=10r=300$ rows, which corresponds roughly to the number of rows that have norm above $0.05$. We also use standard PCA with $N_0=89$ samples so that the number of measurements used in SuperPCA and PCA are the same. In addition both in SuperPCA and classical PCA we also used the initial sample of size $n$ in the approximation. Finally, for comparison we also implemented Johnstone and Lu's Sparse PCA algorithm \cite{Johnstone2009}. This algorithm selects the rows of largest variance and applies PCA to the subsampled data then pads the resulting principal components with zeros in the other coordinates. We used the same number of subsampled rows in SuperPCA and in Sparse PCA and the same number of data samples. We selected the rows of largest variance based on the (small) full data sample generated for standard PCA.

This gives the approximate principal components displayed in Figure ~\ref{fig:MNISTimg}. Comparing them with the population principal components we see that the third component is slightly better approximated by SuperPCA than classical PCA. The error $\sin\Theta ([u_1, u_2, u_3],[\hat u_1, \hat u_2, \hat u_3])$ in SuperPCA is 0.35 whereas the error in PCA is 0.58 and the error in Sparse PCA is 0.55. We can see in Figure ~\ref{fig:MNISTimg} that SuperPCA gives a significantly better approximation of the third principal component.

\begin{figure}[ht]
    \centering
    \subfloat[Population principal components\label{fig:top-left}]{%
        \includegraphics[width=0.7\textwidth]{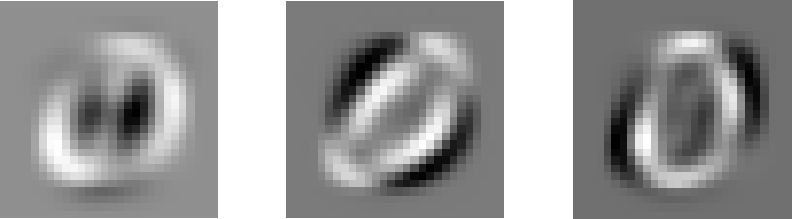}}
    \hfill
    \subfloat[SuperPCA approximation ($N=233$ samples)\label{fig:top-right}]{%
        \includegraphics[width=0.7\textwidth]{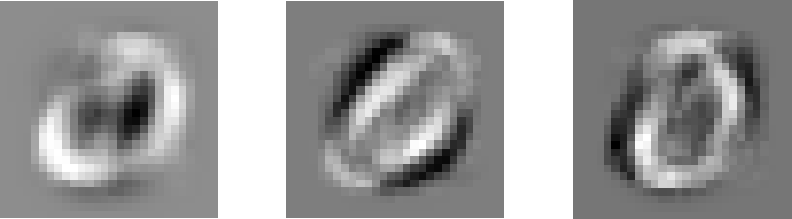}}
    \par\medskip
    \subfloat[PCA approximation ($N_0=89$ samples)\label{fig:bottom}]{%
        \includegraphics[width=0.7\textwidth]{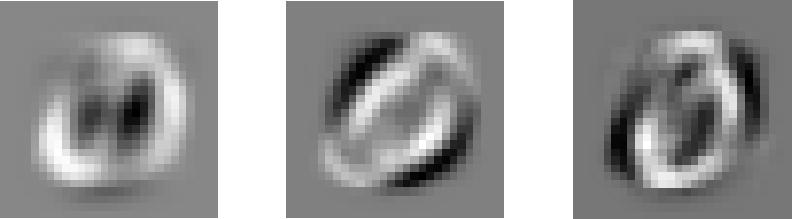}}
    \hfill
    \subfloat[Sparse PCA \cite{Johnstone2009} approximation ($N=233$ samples)\label{fig:bottom2}]{%
        \includegraphics[width=0.7\textwidth]{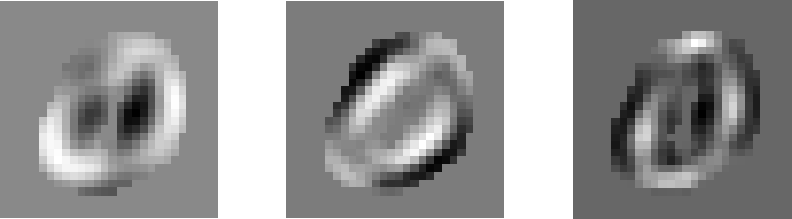}}

    \caption{Leading 3 principal components and their estimates obtained with SuperPCA, PCA and SparsePCA.}
    \label{fig:MNISTimg}
\end{figure}

\section{Conclusion}
We demonstrated that in the spiked covariance model, one can find an estimate of the leading principal components far more accurately than the initial estimate obtained from PCA by exploiting a larger subspace of eigenvectors of the sample covariance matrix. Assuming the leading signals have few dominant components or that the signal is strong, our proposed algorithm, SuperPCA, efficiently finds the best estimate in that subspace. 
An open question is the derivation of bounds on the sine of the angle between the desired principal components and the candidate subspace in terms of $\beta_i$, $n$, and $p$.

Another question is the study of the convergence of the SuperPCA algorithm, or the complexity of the method for a desired final accuracy. For a fixed candidate subspace we observe a $\mathcal{O}(1/\sqrt{N})$ decrease of the sine of the angle between the desired population principal components and the estimated principal components (where $N$ is the number of subsampled data vectors), with a constant factor that probably depends on some spectral gap, but we leave the theoretical proof for future research. 

Additionally, we think that solving a total least square problem that accounts for noise in the matrices $S\Uh_1$ and $SX_2$ could lead to an alternative version of SuperPCA with potential benefits in terms of robustness. We leave this algorithmic idea for future work.

%% file: Proofs.tex
\section{Proofs of Theorems \ref{thm:onesigbound} and \ref{thm:multsigbound}} \label{sec:proofs}

In this section we give proofs of Theorems \ref{thm:onesigbound} and \ref{thm:multsigbound}. Theorem \ref{thm:onesigbound} is specialized to the case when we estimate only the dominant signal, whereas Theorem \ref{thm:multsigbound} considers a subspace spanned by multiple signals. The proof of Theorem \ref{thm:onesigbound} is not a specialization of Theorem \ref{thm:multsigbound} as we show below by proving these results using different techniques. In the special case when $r_1 = 1$, Theorem \ref{thm:multsigbound}'s bound is weaker than Theorem \ref{thm:onesigbound}'s bound when the noise, $\sigma$ is large compared to the signal strengths. 

The statement for Theorem \ref{thm:multsigbound} in this section is a generalization of the statement in Section \ref{sec:theory}. The generalization accommodates cases where the model has significantly smaller noise than the signal strengths. 

\subsection{Proof of Theorem \ref{thm:onesigbound}}
We first transform the scaled data matrix into a block lower triangular form using right orthogonal transformations. Note that right orthogonal transformations do not change the left singular vectors nor the singular values of the matrix. 

First, since Gaussian matrices are invariant under orthogonal transformations, the scaled data matrix can be given by 
\begin{equation}
    X_n = \frac{1}{\sqrt{n}} \left(\sum_{j = 1}^k \sqrt{\beta_j}g_j^i u_j +\sigma \eta_i \right) \overset{d}{=} \frac{1}{\sqrt{n}}[U,U_\perp]\underbrace{\mathrm{diag(\sqrt{\beta_1+\sigma^2},...,\sqrt{\beta_k+\sigma^2},\sigma,...,\sigma)}}_{p\times p} G_{p\times n},
\end{equation} where $U = [u_1,...,u_k]\in \Re^{p\times k}$ is the matrix of signal directions with $U_\perp$ being any orthogonal complement of $U$, and $G_{a\times b}$ is an $a\times b$ Gaussian matrix with entries i.i.d. $\mathcal{N}(0,1)$. Here, $\overset{d}{=}$ is used to denote equality in distribution. Now define
\begin{align} \label{eq:transformData}
    \Xt_n&:= [U,U_\perp]^TX_n \overset{d}{=} \frac{1}{\sqrt{n}}\underbrace{\mathrm{diag(\sqrt{\beta_1+\sigma^2},...,\sqrt{\beta_k+\sigma^2},\sigma,...,\sigma)}}_{p\times p} G_{p\times n} \notag \\
    &=  \frac{1}{\sqrt{n}}\underbrace{\mathrm{diag(\sigma_1,...,\sigma_k,\sigma,...,\sigma)}}_{p\times p} G_{p\times n}. 
\end{align}

For shorthand let the rows of $\Xt_n$ be $b_i = u_i^TX_n \in \R^{1\times n}$ for each $1\leq i \leq k$, i.e., the indices corresponding to the signals. Then
$$ \Xt_n = \begin{bmatrix}
        b_1 \\ \vdots \\ b_k \\ \frac{\sigma }{\sqrt{n}}G_{(p-k)\times n}
    \end{bmatrix},$$
where $b_i \sim \mathcal{N}(0,(\beta_i+\sigma^2)I_n)$. Let $Q_{B_1}$ be an $n\times n$ orthogonal matrix with $b_1/\|b_1\|_2$ as the first column. Then right multiplication with $Q_{B_1}$ sets the first row of $\Xt_n$ to zero, except for the (1,1) element which is set to $\|b_1\|_2$. The rotation does not change the distribution of any of the other rows. Now, we use the notation $b_j^{(-k)}$ to indicate the $(n-k)$-dimensional vector with distribution $\mathcal{N}(0, (\beta_j + \sigma^2)I_{n-k})$. The second row of
$\Xt_n Q_{B_1}$ is $[\sqrt{\beta_2 + \sigma^2}\eta_{21}\,\,\, (b_2^{(-1)})^T]$, for $\eta_{21}\sim\mathcal{N}(0,1)$. We can repeat the same procedure as before, defining an orthogonal matrix $Q_{B_2}=\big[
  \begin{smallmatrix}
1 & 0 \\ 0 &    \hat Q_{B_2} 
  \end{smallmatrix}
\big]$  with the first column equal to $e_1$ (first canonical basis vector) and the second column equal to $ [0 \,\, (b_2^{(-1)})^T]^T$. Continuing this process, we obtain
\begin{equation}\label{eq:XtTriangular}
     \Xt_n Q_{B_1}\dots Q_{B_k} \stackrel{d}{=} \begin{bmatrix}
\|b_1\|_2 & \\
\sqrt{\beta_2+\sigma^2}\eta_{21} & \|b_2^{(-1)}\|_2 \\
\vdots &  & \ddots\\
\sqrt{\beta_k+\sigma^2}\eta_{k1} & \sqrt{\beta_k+\sigma^2}\eta_{k2} & \cdots & \|b_k^{(-(k-1))}\|_2\\
 & \frac{\sigma}{\sqrt{n}} G_{p-k,k} &&& \frac{\sigma}{\sqrt{n}}G_{p-k,n-k}
  \end{bmatrix} := \tilde{X}^{(T)}_n,
\end{equation}
where each $\eta_{ij}\sim \mathcal{N}(0,1)$ is i.i.d., 
$G_{i,j}$ is $i\times j$ Gaussian, and
$b_j^{-(j-1)}\in\mathbb{R}^{n-j+1}$ is a Gaussian vector with distribution $\sim \mathcal{N}(0,(\beta_j+\sigma^2)I_{n-j+1})$.

We now use the block lower triangular form, $\Xt^{(T)}_n$ to prove Theorem \ref{thm:onesigbound}. Note that $\Xt^{(T)}_n$ and $\Xt_n$ have the same left singular vectors and $X_n,\Xt_n$ and $\Xt^{(T)}_n$ all have the same singular values.

\begin{theorem}[Theorem \ref{thm:onesigbound}] \label{thm:genMultsig}
    Let $X_n = \frac{1}{\sqrt{n}}X=\frac{1}{\sqrt{n}}[x_1,x_2,...,x_n] \in \Re^{p\times n}$ be the scaled data matrix where the data matrix $X$ is as defined in \eqref{eq:1.defmodel}. Then
    \begin{equation} \tag{\ref{eq:onesigbound}}
        \sin\theta(u_1,\Uh_1) \leq \sqrt{\frac{\sigma_1(X_n)^2-\norm{u_1^TX_n}_2^2}{\sigma_1(X_n)^2-\sigma_{r_1+1}(X_n)^2}}
    \end{equation} where $u_1$ is the dominant signal direction and $\Uh_1 \in \Re^{p\times r_1}$ is the matrix containing the $r_1$ leading left singular vectors of $X_n$.
\end{theorem}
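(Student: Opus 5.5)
The plan is to work directly with the full SVD of $X_n$ and expand $u_1$ in the basis of left singular vectors. The block lower triangular form \eqref{eq:XtTriangular} is not needed for this bound. The statement is deterministic, so no distributional facts are required either. The argument is a Rayleigh-quotient-type inequality. Note that the hypothesis $\sigma_1(X_n)>\sigma_{r_1+1}(X_n)$ from Theorem \ref{thm:onesigbound} is needed to make the right-hand side well defined.

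First, complete the left singular vectors of $X_n$ to an orthonormal basis $\hat u_1,\dots,\hat u_p$ of $\R^p$. Attach singular values $\sigma_i(X_n)$, with the convention $\sigma_i(X_n)=0$ for $i>\min(p,n)$, so that $X_n^T\hat u_i=\sigma_i(X_n)\hat v_i$ with orthonormal $\hat v_i$ (or $X_n^T\hat u_i=0$). Write $u_1=\sum_{i=1}^p c_i\hat u_i$ with $\sum_i c_i^2=1$.

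Then $\Uh_1=[\hat u_1,\dots,\hat u_{r_1}]$ and $\Uh_3=[\hat u_{r_1+1},\dots,\hat u_p]$. By the identity $\|\sin\Theta\|_2=\|U_1^T\Uh_3\|_2$ from Section \ref{sec:theory}, applied with $r_1=1$ on the population side, we get
\[
s:=\sin^2\theta(u_1,\Uh_1)=\sum_{i>r_1}c_i^2,
\]
and the remaining weight is $\sum_{i\le r_1}c_i^2=1-s$.

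Next compute $\|u_1^TX_n\|_2^2=\|X_n^Tu_1\|_2^2=\sum_i c_i^2\sigma_i(X_n)^2$, which uses the orthonormality of the $\hat v_i$. Bound the terms $i\le r_1$ by $\sigma_1(X_n)^2$ and the terms $i>r_1$ by $\sigma_{r_1+1}(X_n)^2$. This gives
\[
\|u_1^TX_n\|_2^2\le(1-s)\,\sigma_1(X_n)^2+s\,\sigma_{r_1+1}(X_n)^2 .
\]
Rearranging yields $s\,(\sigma_1(X_n)^2-\sigma_{r_1+1}(X_n)^2)\le\sigma_1(X_n)^2-\|u_1^TX_n\|_2^2$. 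Dividing by the positive gap and taking square roots gives \eqref{eq:onesigbound}.

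The only delicate point is bookkeeping when $p>n$: the singular vectors spanning the null space of $X_n^T$ must be included, and they contribute zero to $\|X_n^Tu_1\|^2$, which is consistent with the bound. Apart from that, the argument is elementary.
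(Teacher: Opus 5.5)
Your proof is correct and is essentially the paper's argument. The paper's inequalities \eqref{eq:ineq1} and \eqref{eq:ineq2}, once squared and combined using $\|\vt_1\|_2^2+\|\vt_2\|_2^2=1$, give exactly your bound $\|u_1^TX_n\|_2^2\le(1-s)\,\sigma_1(X_n)^2+s\,\sigma_{r_1+1}(X_n)^2$, and both proofs then rearrange identically. The only difference is that the paper first passes to the rotated triangular form \eqref{eq:XtTriangular} so that $u_1^TX_n$ becomes $\|b_1\|_2e_1^T$, a change of coordinates that, as you note, is unnecessary for this deterministic statement.
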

\begin{proof}
    First, let the SVD of $\Xt_n^{(T)}$ be
    \begin{equation}
        \Xt^{(T)}_n= \Ut \Sigh \Vt^T = \left[\Ut_1,\Ut_2\right]
    \begin{bmatrix}
        \Sigh_1 &  \\  
         & \Sigh_2
    \end{bmatrix}
    \left[\Vt_1,\Vt_2 \right]^T
    \end{equation} where $\Ut_1 \in \Re^{p\times r_1}, \Ut_2 \in \Re^{p\times (p-r_1)}, \Vt_1 \in \Re^{n\times r_1}, \Vt_2 \in \Re^{n\times (n-r_1)}, \Sigh_1 \in \Re^{r_1\times r_1}$ and $\Sigh_2 \in \Re^{(p-r_1)\times (n-r_1)}$. Here, $\Sigh_1$ and $\Sigh_2$ are the singular values of both $\Xt^{(T)}_n$ and $X_n$ where \sloppy $\Sigh_1 = \diag(\sigh_1,\sigh_2,...,\sigh_{r_1})$ and $\Sigh_2 = \begin{bmatrix}
        \diag(\sigh_{r_1+1},\sigh_{r_1+2},...,\sigh_{n}) \\ 0_{(p-n) \times (n-r_1)}
    \end{bmatrix}$ if $p > n$ and $\Sigh_2 = \begin{bmatrix}
        \diag(\sigh_{r_1+1},\sigh_{r_1+2},...,\sigh_{p}) & 0_{(p-r_1)\times (n-p)}
    \end{bmatrix}$ if $p \leq n$, i.e., $\sigh_j = \sigma_j(X_n)$. Furthermore let $\Uh$ be the left singular vectors of $X_n$ and $\Uh_\perp$ a complement of $\Uh$. Note that by letting $\Uh_\perp$ be any orthogonal complement of $\Uh$, we have $[\Ut_1,\Ut_2] = [U,U_\perp]^T[\Uh,\Uh_\perp]$ where $U_\perp$ is any orthogonal complement of $U = [u_1,...,u_k]$. Therefore the left singular vectors of $\Xt_n^{(n)}$ are the left singular vectors of $X_n$ up to left orthogonal transformation by $[U,U_\perp]^T$.

    We now have
    \begin{equation*}
        \sin\theta(u_1,\Uh) = \norm{u_1^T\Uh_\perp}_2 = \norm{e_1^T [u_1, ...,u_k, U_\perp]^T\Uh_\perp}_2 = \sin\theta(e_1,[U,U_\perp]^T\Uh) = \sin\theta(e_1,\Ut_1)
    \end{equation*} where $e_1$ is the first canonical basis vector. Therefore we bound $ \sin\theta(e_1,\Ut_1)$ instead.

    We first bound $\cos\theta(e_1,\Ut_1) = \norm{e_1^T\Ut_1}_2$ using the quantities $e_1^T\Xt^{(T)}_n \Vt_1$ and $e_1^T\Xt^{(T)}_n \Vt_2$. First, for $e_1^T\Xt^{(T)}_n\Vt_1$ we have
    \begin{equation}
        e_1^T\Xt^{(T)}_n \Vt_1 = \norm{b_1}_2 e_1^T\Vt_1 = \norm{b_1}_2 \vt_1
    \end{equation} where $\vt_1 \in \Re^{1\times r_1}$ is the first row of $\Vt_1$. On the other hand, we also have
    \begin{equation}
        e_1^T\Xt^{(T)}_n \Vt_1 = e_1^T \Ut_1 \Sigh_1 = \ut_1 \Sigh_1
    \end{equation} where $\ut_1 \in \Re^{1\times r_1}$ is the first row of $\Ut_1$. Therefore $\norm{b_1}_2 \vt_1 = \ut_1 \Sigh_1$ and we get
    \begin{equation} \label{eq:ineq1}
        \norm{b_1}_2\norm{\vt_1}_2 =\norm{\ut_1 \Sigh_1}_2 \leq \norm{\ut_1}_2  \sigh_{1}.
    \end{equation}
    Similarly for $e_1^T\Xt^{(T)}_n \Vt_2$ we get $\norm{b_1}_2 \vt_2 = \ut_2 \Sigh_2$, where $\ut_2$ is the first row of $\Ut_2$ and $\vt_2$ is the first row of $\Vt_2$, which gives us 
    \begin{equation} \label{eq:ineq2}
       \norm{b_1}_2 \norm{\vt_2}_2 \leq \norm{\ut_2}_2 \sigh_{r_1+1}.
    \end{equation} Now since $\Ut$ and $\Vt$ are orthogonal matrices, we have
    $\norm{\ut_1}_2^2+\norm{\ut_2}_2^2 = 1$ and $\norm{\vt_1}_2^2+\norm{\vt_2}_2^2 = 1$. Therefore we can rewrite \eqref{eq:ineq1} as
    \begin{equation} \label{eq:ineq1r}
        \norm{b_1}_2^2 \left(1-\norm{\vt_2}_2^2\right) \leq  \left(1-\norm{\ut_2}_2^2\right) \sigh_1^2,
    \end{equation} which gives us
    \begin{equation*}
        \norm{\ut_2}_2^2 \leq 1- \frac{\norm{b_1}_2^2}{\sigh_1^2}\left(1-\norm{\vt_2}_2^2\right) \leq 1- \frac{\norm{b_1}_2^2}{\sigh_1^2}+\frac{\norm{\ut_2}_2^2\sigh_{r_1+1}^2}{\sigh_1^2}
    \end{equation*} using \eqref{eq:ineq2} and \eqref{eq:ineq1r}.
    Finally, rearranging gives us the desired inequality,
    \begin{align*}
        \sin\theta(u_1,\Uh) = \sin\theta(e_1,\Ut_1) = \norm{e_1^T\Ut_2}_2 = \norm{\ut_2}_2 \leq \sqrt{\frac{\sigh_1^2-\norm{b_1}_2^2}{\sigh_1^2-\sigh_{r_1+1}^2}} = \sqrt{\frac{\sigma_1(X_n)^2-\norm{u_1^TX_n}_2^2}{\sigma_1(X_n)^2-\sigma_{r_1+1}(X_n)^2}}.
    \end{align*}
\end{proof}

Extending this proof to the case where multiple signals are approximated simultaneously gives a slightly different bound involving the smallest singular value $\tilde \sigma$ of the top left $r_1\times r_1$ lower triangular block of $\Xt^{(T)}_n$. Since this makes the numerator substantially larger, the bound does not scale well, according to our experiments.

\subsection{Proof of Theorem \ref{thm:multsigbound}}
In this section, we prove Theorem \ref{thm:multsigbound} in a more general form, which scales better for small noise. We work with the matrix $\Xt_n = [U,U_\perp]^T X_n$ in this section as in \eqref{eq:XtTriangular}. Let $r_*$ be a parameter defined for analysis and satisfies $r_1 \leq r_* \leq r_2$ (Theorem \ref{thm:multsigbound} corresponds to the case when $r_* = r_1$). Note that the dimensions satisfy the following relation,
\begin{equation*}
    1 \leq r_1 \leq r_* \leq r_2 \leq k \leq \min\{p,n\}.
\end{equation*} 
Now define $\Xt_1 \in \R^{r_* \times n}$ and $\Xt_2 \in \R^{(p-r_*)\times n}$ as
\begin{equation} \label{eq:divDataMat}
        \Xt_n = \begin{bmatrix} U & U_\perp\end{bmatrix}^T X_n = \frac{1}{\sqrt{n}}\begin{bmatrix}
        b_1 \\
    \vdots \\
    b_k \\
    \sigma G_{(p-k)\times n}
    \end{bmatrix} = \begin{bmatrix}
        \Xt_1 \\ \Xt_2 
    \end{bmatrix},
\end{equation} where $b_i \sim \mathcal{N}(0,(\beta_i+\sigma^2)I_n)$. We use this division of the transformed data matrix $\Xt_n$ in the proof.

For subspaces of equal dimension, an important property of the canonical angles is that $\norm{\sin\Theta(\cdot,\cdot)}$ is a metric on the space of $k$-dimensional subspaces \cite{MPT1990}. We extend the corresponding triangle inequality to subspaces of different dimensions in the Lemma \ref{lem:triangle} below.

\begin{lemma}[Triangle inequality for subspaces of different dimensions] \label{lem:triangle}
Let $U = [U_1,U_2,U_3]$, $\hat{U} = [\hat{U}_1,\hat{U}_2,\hat{U}_3]$ and $\tilde{U} = [\tilde{U}_1,\tilde{U}_2,\tilde{U}_3]$ be $n\times n$ orthogonal matrices where index $1$ corresponds to $j$ columns, index $2$ corresponds to $\ell-j>0$ columns and index $3$ corresponds to the other $n-\ell$ columns. Then 
    \begin{equation}
        \norm{\sin\Theta(U_1,[\tilde{U}_1,\tilde{U}_2])} \leq \norm{\sin\Theta(U_1,[\hat{U}_1,\hat{U}_2])} +\norm{\sin\Theta([\hat{U}_1,\hat{U}_2],[\tilde{U}_1,\tilde{U}_2])}
    \end{equation}
    and 
    \begin{equation}
        \norm{\sin\Theta(U_1,[\tilde{U}_1,\tilde{U}_2])} \leq \norm{\sin\Theta(U_1,\hat{U}_1)} +\norm{\sin\Theta(\hat{U}_1,[\tilde{U}_1,\tilde{U}_2])}
    \end{equation} for any unitarily invariant norms.
\end{lemma}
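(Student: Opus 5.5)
We would prove both inequalities by reducing $\|\sin\Theta\|$ to norms of cross-products with orthogonal complements and then inserting a projector. By the identity recorded at the start of Section~\ref{sec:theory}, the singular values of $\sin\Theta(A,B)$ for orthonormal $A$ with $j$ columns and $B$ with $\ell\ge j$ columns are the $j$ nonzero singular values of $B_\perp^T A$, where $B_\perp$ is an orthonormal basis of the complement of $B$. Concretely,
\[
\norm{\sin\Theta(U_1,[\tilde U_1,\tilde U_2])} = \norm{\tilde U_3^T U_1},\qquad
\norm{\sin\Theta(U_1,[\hat U_1,\hat U_2])} = \norm{\hat U_3^T U_1}.
\]
For the equal-dimension pair we have
\[
\norm{\sin\Theta([\hat U_1,\hat U_2],[\tilde U_1,\tilde U_2])} = \norm{\tilde U_3^T[\hat U_1,\hat U_2]} .
\]
This last identity is the standard one for equal-dimensional subspaces \cite{MPT1990}. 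All of these identities hold for any unitarily invariant norm, since they are statements about singular values, possibly padded with zeros.

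For the first inequality we insert the identity $I=[\hat U_1,\hat U_2][\hat U_1,\hat U_2]^T+\hat U_3\hat U_3^T$. This gives
\[
\tilde U_3^TU_1 = \tilde U_3^T[\hat U_1,\hat U_2]\,[\hat U_1,\hat U_2]^TU_1 + \tilde U_3^T\hat U_3\,\hat U_3^TU_1 .
\]
We then apply the triangle inequality for the norm together with the bound $\norm{ABC}\le\norm{A}_2\norm{B}\norm{C}_2$ for unitarily invariant norms. The factors $[\hat U_1,\hat U_2]^TU_1$ and $\tilde U_3^T\hat U_3$ have spectral norm at most $1$. The first term is therefore bounded by $\norm{\tilde U_3^T[\hat U_1,\hat U_2]}$ and the second by $\norm{\hat U_3^TU_1}$, which is exactly the first claimed inequality.

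For the second inequality we use $I=\hat U_1\hat U_1^T+[\hat U_2,\hat U_3][\hat U_2,\hat U_3]^T$, which gives
\[
\tilde U_3^TU_1=\tilde U_3^T\hat U_1\,\hat U_1^TU_1+\tilde U_3^T[\hat U_2,\hat U_3]\,[\hat U_2,\hat U_3]^TU_1 .
\]
Here $\hat U_1$ and $U_1$ both have $j$ columns, so $\norm{[\hat U_2,\hat U_3]^TU_1}=\norm{\sin\Theta(U_1,\hat U_1)}$ by the equal-dimension identity. Also, $\hat U_1$ has $j\le \ell$ columns, so $\norm{\tilde U_3^T\hat U_1}=\norm{\sin\Theta(\hat U_1,[\tilde U_1,\tilde U_2])}$ by the unequal-dimension identity. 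The contraction bounds then give the second inequality in the same way.

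The main point needing care is the dimension bookkeeping in these norm identities. In the unequal-dimension case the matrices $\tilde U_3^TU_1$ are $(n-\ell)\times j$, while $\sin\Theta$ is $j\times j$. We must check that $\sin\theta_i=\sigma_i(\tilde U_3^TU_1)$ for $i\le j$, which follows from $U_1^T(P+P_\perp)U_1=I$, where $P$ and $P_\perp$ are the orthogonal projectors onto $\mathrm{span}[\tilde U_1,\tilde U_2]$ and its complement. Any extra singular values are zero, so unitarily invariant norms agree after zero-padding. One further subtlety is that if $n-\ell<j$ then some angles must be zero; this is consistent with the padding. Everything else is routine.
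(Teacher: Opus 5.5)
Your proof is correct and is essentially the paper's argument: the paper also writes $\|\sin\Theta(U_1,[\tilde U_1,\tilde U_2])\| = \|U_1^T\tilde U_3\|$, inserts $I=[\hat U_1,\hat U_2][\hat U_1,\hat U_2]^T+\hat U_3\hat U_3^T$, and bounds each term using the triangle inequality and the fact that the orthonormal cross factors are contractions. Your version correctly writes the split as an inequality, where the paper's display has an ``$=$''. Your splitting $I=\hat U_1\hat U_1^T+[\hat U_2,\hat U_3][\hat U_2,\hat U_3]^T$ for the second inequality is exactly what the paper's ``proved similarly'' amounts to.
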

\begin{proof}
    \begin{align*}
    \norm{\sin\Theta(U_1,[\tilde{U}_1,\tilde{U}_2])} = \norm{U_1^T\tilde{U}_3} &=\norm{U_1^T[\hat{U}_1,\hat{U}_2,\hat{U}_3] [\hat{U}_1,\hat{U}_2,\hat{U}_3]^T\tilde{U}_3 } \\
    &= \norm{U_1^T[\hat{U}_1, \hat{U}_2][\hat{U}_1, \hat{U}_2]^T\tilde{U}_3} +\norm{U_1^T\hat{U}_3\hat{U}_3^T \tilde{U}_3} \\
    &\leq \norm{[\hat{U}_1, \hat{U}_2]^T\tilde{U}_3} +\norm{U_1^T\hat{U}_3} \\
    &\leq \norm{\sin\Theta([\hat{U}_1,\hat{U}_2],[\tilde{U}_1,\tilde{U}_2])} + \norm{\sin\Theta(U_1,[\hat{U}_1,\hat{U}_2])}.
    \end{align*} The second result can be proved similarly.
\end{proof}

Lemma \ref{lem:triangle} will be useful for proving a generalization of Theorem \ref{thm:multsigbound} below as we compare subspaces of different dimensions.

\begin{theorem}[Generalization of Theorem \ref{thm:multsigbound}]\label{thm:multsigbound2}
    Let $X_n =\frac{1}{\sqrt{n}}X = \frac{1}{\sqrt{n}} [x_1,x_2,...,x_n]\in \R^{p\times n}$ be the scaled data matrix where the data matrix $X$ is as defined in \eqref{eq:1.defmodel}.
    Then 
    \begin{equation}
        \norm{\sin\Theta(U_1,[\Uh_1,\Uh_2])}_2 \leq \min_{r_*\in [r_1,r_2]}\frac{\norm{U_{*,\perp}^T X_n \Vc}_2 \sigma_{\min}(U_*^TX_n)}{
        \sigma_{\min}(U_*^TX_n)^2 - \sigma_{r_2+1}(X_n)^2},
    \end{equation} where $U_*$ is the $r_*$ leading signal directions with $U_{*,\perp}$ as its orthogonal complement, $[\Uh_1,\Uh_2]$ is the leading $r_2$ left singular values of the data matrix and $\Vc$ is the leading $r_1$ right singular vectors of $U_1^TX_n$, which is independent of $U_{*,\perp}^TX_n$.
\end{theorem}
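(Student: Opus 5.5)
The plan is to fix an arbitrary $r_*\in[r_1,r_2]$, prove the bound for that $r_*$, and then take the minimum. Since $U_1$ consists of the first $r_1$ columns of $U_*$, we have $\norm{\sin\Theta(U_1,[\Uh_1,\Uh_2])}_2=\norm{U_1^T\Uh_3}_2\le\norm{U_*^T\Uh_3}_2$. It therefore suffices to bound $\norm{U_*^T\Uh_3}_2$, where $\Uh_3$ is the complement of the leading $r_2$ left singular vectors. This is the monotonicity special case of Lemma~\ref{lem:triangle}. From here on I write $U_*$ in place of $U_1$ and do a Wedin-type argument that uses only the signal block as a "residual".

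\textbf{Setup.} Take the SVD $U_*^TX_n=\Uc\Sigc\Vc^T$ with $\Sigc$ of size $r_*\times r_*$ and invertible (nonzero because $\sigma_{\min}(U_*^TX_n)>\sigma_{r_2+1}(X_n)\ge0$). Write $\Uh_3^TX_n=\Sigma_3\Vh_3^T$ with $\norm{\Sigma_3}_2\le\sigma_{r_2+1}(X_n)$, and set $S:=\Uh_3^TU_*\Uc$, so that $\norm{S}_2=\norm{U_*^T\Uh_3}_2$.

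\textbf{First identity.} Multiply $U_*^T\Uh_3\Sigma_3=U_*^TX_n\Vh_3=\Uc\Sigc\Vc^T\Vh_3$ on the left by $\Uc^T$. This gives $S^T\Sigma_3=\Sigc\Vc^T\Vh_3$, hence
\[
\Vh_3^T\Vc=\Sigma_3^TS\Sigc^{-1}.
\]

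\textbf{Second identity.} Split $I=U_*U_*^T+U_{*,\perp}U_{*,\perp}^T$ inside $\Uh_3^TX_n\Vc$ to get
\[
\Sigma_3\Vh_3^T\Vc=S\Sigc+E,\qquad E:=\Uh_3^TU_{*,\perp}U_{*,\perp}^TX_n\Vc .
\]
Since $\Uh_3$ and $U_{*,\perp}$ have orthonormal columns, $\norm{E}_2\le\norm{U_{*,\perp}^TX_n\Vc}_2$.

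\textbf{Sylvester-type equation.} Substituting the first identity into the second gives $\Sigma_3\Sigma_3^TS\Sigc^{-1}=S\Sigc+E$. With $T:=S\Sigc$ this reads
\[
T-\Sigma_3\Sigma_3^T\,T\,\Sigc^{-2}=-E .
\]
Taking norms, $\norm{T}_2\bigl(1-\sigma_{r_2+1}(X_n)^2/\sigma_{\min}(\Sigc)^2\bigr)\le\norm{E}_2$. Combining this with $\norm{S}_2\le\norm{T}_2/\sigma_{\min}(\Sigc)$ yields
\[
\norm{S}_2\le\frac{\norm{E}_2\,\sigma_{\min}}{\sigma_{\min}^2-\sigma_{r_2+1}^2},\qquad \sigma_{\min}=\sigma_{\min}(U_*^TX_n),\ \sigma_{r_2+1}=\sigma_{r_2+1}(X_n).
\]
This is the claimed form. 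Minimizing over $r_*$ finishes the proof, and $r_*=r_1$ recovers Theorem~\ref{thm:multsigbound}.

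\textbf{Independence claim.} By \eqref{eq:divDataMat}, $\Xt_n$ is a diagonally scaled Gaussian matrix, so its row blocks $U_*^TX_n$ and $U_{*,\perp}^TX_n$ are independent. The matrix $\Vc$ is a function of the first block only, which gives the stated independence.

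\textbf{Expected obstacle.} The delicate point is matching $\Vc$ with the statement. My argument naturally produces the $r_*$ leading right singular vectors of $U_*^TX_n$, whereas the theorem states $\Vc$ as the $r_1$ right singular vectors of $U_1^TX_n$. These coincide when $r_*=r_1$. For $r_*>r_1$ I would first try to run the argument with the $r_1$-column matrix $\Vc$ directly, bounding $\norm{U_1^T\Uh_3}_2$ through $U_*$ only in the residual term $E$. If that does not close, the fallback is to interpret $\Vc$ as coming from $U_*^TX_n$, which still gives a valid bound of the same shape.
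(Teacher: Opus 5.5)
Your proof is correct and is essentially the paper's argument. Your two identities $S^T\Sigma_3=\Sigc\Vc^T\widehat V_3$ and $\Sigma_3\widehat V_3^T\Vc=S\Sigc+E$ are exactly the first block rows the paper reads off from the trailing singular vectors of the rotated block lower-triangular matrix $\Xc_n$, with $\Uc_{31}=S^T$ and $\Vc_{31}=\Vc^T\widehat V_3$. The paper eliminates $\Vc_{31}$ by adding two norm inequalities where you substitute exactly, and its use of Lemma~\ref{lem:triangle} is your monotonicity step.

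Your fallback reading of $\Vc$ is the right one and is what the paper actually proves: there $\Vc$ comes from the SVD of $\Xt_1=U_*^TX_n\in\R^{r_*\times n}$. The literal $r_1$-column version is false for $r_*>r_1$. Take $\Xt_n$ with rows $(a,0,0)$, $(c,b,0)$, $(0,d,0)$, $abcd\neq0$, $r_1=1$, $r_*=r_2=2$. Then that right-hand side is $0$, while $e_1$ does not lie in the column space of $\Xt_n$, which is its top-$2$ left singular subspace. By continuity this failure also occurs with positive probability under the model.
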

\begin{proof}
    Let $\Uh_\perp$ be an orthogonal complement of $[\Uh_1,\Uh_2]$ and $[\Ut_1,\Ut_2,\Ut_\perp] \in \Re^{p\times p}$ be the orthogonal matrix of left singular vectors of $\Xt_n$ where $\Ut_1\in \Re^{p\times r_1}$, $\Ut_2 \in \Re^{p\times (r_2-r_1)}$ and $\Ut_\perp \in \Re^{p\times (p-r_2)}$. Note that $[\Ut_1,\Ut_2,\Ut_\perp] = [U_1,U_\perp]^T[\Uh_1,\Uh_2,\Uh_\perp]$. Then 
    \begin{align*}
        \norm{\sin\Theta(U_1,[\Uh_1,\Uh_2])}_2 &= \norm{U_1^T\Uh_\perp}_2 = \norm{[I_{r_1},0][U_1,U_\perp]^T\Uh_\perp}_2 \\ &= \norm{[I_{r_1},0]\Ut_\perp}_2 =\norm{\sin\Theta\left(\begin{bmatrix}
            I_{r_1} \\ 0
        \end{bmatrix},[\Ut_1,\Ut_2]\right)}.
    \end{align*}
 Let the SVD of $\Xt_1 \in \Re^{r_*\times n}$ be $\Xt_1 = \Uc \Sigc \Vc^T$ and let $\Vc_\perp \in \R^{n\times (n-r_*)}$ be an orthogonal complement of $\Vc$. Now, form the following matrix,
\begin{equation}
    \Xc_n = \begin{bmatrix}
        \Uc & 0 \\
       0  &  I_{(p-r_*)}
    \end{bmatrix}^T \begin{bmatrix}
        \Xt_1 \\ \Xt_2
    \end{bmatrix}
    \begin{bmatrix}
        \Vc & \Vc_{\perp}
    \end{bmatrix} = \begin{bmatrix}
        \Sigc  & 0 \\
        \Xt_2\Vc  & \Xt_2\Vc_{\perp}
    \end{bmatrix} \in \R^{p\times n}.
\end{equation} Note that $\Vc$ and $\Vc_{\perp}$ are only dependent on $\Xt_1$ and therefore $\Vc$ and $\Vc_{\perp}$ are independent of $\Xt_2$ as $\Xt_1$ and $\Xt_2$ are independent. We also note that the singular values of $\Xc_n$, $\Xt_n$ and $X_n$ are the same because $\Xc_n$ and $\Xt_n$ are orthogonal transformations of $X_n$. 

Now let us note that
\begin{align*}
    \sin\Theta\left(\begin{bmatrix}
        \Uc \\ 0
    \end{bmatrix}, \begin{bmatrix}
        \Ut_1,\Ut_2
    \end{bmatrix}\right) &= \sin\Theta\left(\begin{bmatrix}
        \Uc & 0 \\
       0  &  I_{(p-r_*)}
    \end{bmatrix}^T\begin{bmatrix}
        \Uc \\ 0
    \end{bmatrix},\begin{bmatrix}
        \Uc & 0 \\
       0  &  I_{(p-r_*)}
    \end{bmatrix}^T\begin{bmatrix}
        \Ut_1,\Ut_2
    \end{bmatrix}\right) \\
    &= \sin\Theta\left(\begin{bmatrix}
        I_{r_*} \\ 0
    \end{bmatrix}, \begin{bmatrix}
        \Uc_1,\Uc_2
    \end{bmatrix}\right)
\end{align*} where $\Uc_1$ is the leading $r_1$ left singular vectors of $\Xc_n$ and $\Uc_2$ is the next $(r_2-r_1)$ leading left singular vectors. Now, using the triangle inequality, Lemma \ref{lem:triangle}, we obtain 
\begin{align*}
    \norm{\sin\theta\left(\begin{bmatrix}
        I_{r_1} \\ 0
    \end{bmatrix},[\Ut_1,\Ut_2]\right)}_2 &\leq \norm{\sin\theta\left(\begin{bmatrix}
        I_{r_1} \\ 0
    \end{bmatrix},\begin{bmatrix}
        \Uc \\ 0
    \end{bmatrix}\right)}_2 +\norm{\sin\theta\left(\begin{bmatrix}
        \Uc \\ 0
    \end{bmatrix}, \begin{bmatrix}
        \Ut_1,\Ut_2
    \end{bmatrix}\right)}_2 \\ &=
    \norm{\sin\Theta\left(\begin{bmatrix}
        I_{r_*} \\ 0
    \end{bmatrix}, \begin{bmatrix}
        \Uc_1,\Uc_2
    \end{bmatrix}\right)}_2
\end{align*} since 
\begin{equation}
    \norm{\sin\theta\left(\begin{bmatrix}
        I_{r_1} \\ 0
    \end{bmatrix},\begin{bmatrix}
        \Uc \\ 0
    \end{bmatrix}\right)}_2 = \norm{\begin{bmatrix}
        I_{r_1} \\ 0
    \end{bmatrix}^T \begin{bmatrix}
        0 \\ I_{p-r_*}
    \end{bmatrix}}_2 = 0
\end{equation} because $\Uc \in \Re^{r_*\times r_*}$ is an orthogonal matrix and $r_1\leq r_*$.
Therefore, it suffices to bound
\begin{equation}
    \norm{\sin\theta\left(\begin{bmatrix}
        I_{r_*} \\ 0
    \end{bmatrix}, \begin{bmatrix}
        \Uc_1,\Uc_2
    \end{bmatrix}\right)}_2 = \norm{\begin{bmatrix}
        I_{r_*} \\ 0
    \end{bmatrix}^T\Uc_{3}}_2 = \norm{\Uc_{31}}_2
\end{equation} instead, where $\Uc_{3}\in \Re^{p \times (p-r_2)}$ is the trailing $(p-r_2)$ left singular vectors of $\Xc_n$ and $\Uc_{31}$ is the first $r_*$ rows of $\Uc_3$. 

We now analyze $\norm{\Uc_{31}}_2$ using a similar technique to one used in \cite{Nakatsukasa2020b}. Let $\Vc_3 \in \Re^{n\times (n-r_2)}$ be the trailing right singular vectors of $\Xc$ and $\Sigh_3 = \begin{bmatrix}
        \diag(\sigh_{r_2+1},\sigh_{r_2+2},...,\sigh_{n}) \\ 0_{(p-n) \times (n-r_2)}
    \end{bmatrix}$ if $p > n$ or $\Sigh_3 = \begin{bmatrix}
        \diag(\sigh_{r_2+1},\sigh_{r_2+2},...,\sigh_{p}) & 0_{(p-r_2)\times (n-p)}
    \end{bmatrix}$ if $p \leq n$ where $\sigh_i$'s are the singular values of $X_n$, since $X_n$ and $\Xc_n$ have the same singular values. Then we have $\Xc_n \Vc_3 = \Uc_3 \Sigh_3$ and $\Xc_n^T \Uc_3 = \Vc_3 \Sigh_3$. Now divide $\Uc_3$ and $\Vc_3$ into blocks as
    \begin{equation}
        \Uc_3 = \begin{bmatrix}
            \Uc_{31} \\ \Uc_{32}
        \end{bmatrix}, \hspace{1cm} \Vc_3 = \begin{bmatrix}
            \Vc_{31} \\ \Vc_{32}
        \end{bmatrix}
    \end{equation} where $\Uc_{31}\in \Re^{r_*\times (p-r_2)},\Uc_{32}\in \Re^{(p-r_*)\times (p-r_2)},\Vc_{31}\in \Re^{r_*\times (n-r_2)}$ and $\Vc_{32}\in \Re^{(n-r_*)\times (n-r_2)}$. Then using $\Xc_n \Vc_3 = \Uc_3 \Sigh_3$, we get 
    \begin{equation} \label{eq:multsig1}
        \begin{bmatrix}
        \Sigc  & 0 \\
        \Xt_2\Vc  & \Xt_2\Vc_{\perp}
        \end{bmatrix} \begin{bmatrix}
            \Vc_{31} \\ \Vc_{32}
        \end{bmatrix} = \begin{bmatrix}
            \Uc_{31} \\ \Uc_{32} 
        \end{bmatrix} \Sigh_3
    \end{equation} and using $\Xc_n^T \Uc_3 = \Vc_3 \Sigh_3$, we get
    \begin{equation} \label{eq:multsig2}
        \begin{bmatrix}
         \Sigc  & ( \Xt_2\Vc )^T\\
        0 & (\Xt_2\Vc_{\perp})^T
        \end{bmatrix}\begin{bmatrix}
            \Uc_{31} \\ \Uc_{32}
        \end{bmatrix}  = \begin{bmatrix}
            \Vc_{31} \\ \Vc_{32}
        \end{bmatrix} \Sigh_3.
    \end{equation} The first block of \eqref{eq:multsig1} and \eqref{eq:multsig2} give
    \begin{equation}
        \Sigc \Vc_{31} = \Uc_{31}\Sigh_3, \hspace{1cm}
        \Sigc \Uc_{31} + (\Xt_2\Vc)^T \Uc_{32}= \Vc_{31}\Sigh_3
    \end{equation} from which we get the following inequalities
    \begin{equation} \label{eq:multsigineq1}
        \sigma_{\min}(\Sigc) \norm{\Vc_{31}}_2 \leq \norm{ \Sigc \Vc_{31}}_2 = \norm{\Uc_{31}\Sigh_3}_2 \leq  \norm{\Uc_{31}}_2\norm{\Sigh_3}_2
    \end{equation} and 
    \begin{equation} \label{eq:multsigineq2}
        \sigma_{\min}(\Sigc) \norm{\Uc_{31}}_2 \leq \norm{\Sigc \Uc_{31}}_2 = \norm{\Vc_{31}\Sigh_3-(\Xt_2\Vc)^T \Uc_{32}}_2 \leq  \norm{\Vc_{31}}_2\norm{\Sigh_3}_2 + \norm{(\Xt_2\Vc)^T}_2.
    \end{equation}
    Now multiply \eqref{eq:multsigineq1} by $\norm{\Sigh_3}_2$ and multiply \eqref{eq:multsigineq2} by $\sigma_{\min}(\Sigc)$ and then add them together to obtain
    \begin{align*}
        \norm{\Uc_{31}}_2 \leq \frac{\norm{\Xt_2\Vc}_2 \sigma_{\min}(\Sigc)}{(\sigma_{\min}(\Sigc))^2-\norm{\Sigh_3}^2_2} = \frac{\norm{\Xt_2 \Vc}_2 \sigma_{\min}(\Xt_1)}{
        \sigma_{\min}(\Xt_1)^2 - \sigma_{r_2+1}(X_n)^2} 
    \end{align*}
Therefore 
\begin{equation} 
     \norm{\sin\Theta(U_1,[\Uh_1,\Uh_2])}_2 \leq \frac{\norm{\Xt_2 \Vc}_2 \sigma_{\min}(\Xt_1)}{
        \sigma_{\min}(\Xt_1)^2 - \sigma_{r_2+1}(X_n)^2} = \frac{\norm{U_{*,\perp}^T X_n \Vc}_2 \sigma_{\min}(U_*^TX_n)}{
        \sigma_{\min}(U_*^TX_n)^2 - \sigma_{r_2+1}(X_n)^2}
\end{equation} for any $r_*\in [r_1,r_2]$. Minimizing with respect to $r_* \in [r_1,r_2]$, we get the desired result.
\end{proof}

Theorem \ref{thm:multsigbound2} is a generalization of Theorem \ref{thm:multsigbound}. Theorem \ref{thm:multsigbound} corresponds to the case when $r_* = r_1$, which scales well when the noise level is high compared to the signal strengths. When the signal strengths are much larger than the noise level, $r_* = r_1$, hence Theorem \ref{thm:multsigbound} scales badly. Generally, taking $r_* = r_2$ scales much better in this case and Theorem \ref{thm:genMultsig} accommodates both cases.